\newtheorem{theorem}{Theorem}
\newtheorem{lemma}{Lemma}
\newtheorem{corollary}{Corollary}
\newtheorem{remark}{Remark}
\DeclareMathAlphabet{\mathcurve}{OMS}{cmsy}{m}{n}
\newcommand{\define}{\overset{\triangle}{=}}
\newcommand{\argmin}[1]{\operatorname{arg}\min_{#1}}
\newcommand{\E}[2]{\mathbb{E}_{#1}\left[#2\right]}
\DeclareMathOperator{\project}{Proj}
\newcommand{\proj}[2]{\project_#1\left(#2\right)}
\newcommand{\reals}{\mathbb{R}}
\newcommand{\dimcnt}{N}
\newcommand{\set}[1]{\mathcurve{#1}}
\newcommand{\Set}{\set{K}}
\newcommand{\subgrads}[2]{\partial #1(#2)}
\newcommand{\loss}{g}
\newcommand{\decision}{w}
\newcommand{\altdec}{v}
\newcommand{\observe}{\widetilde{\loss}}
\newcommand{\prob}{p}
\newcommand{\noise}{\gamma}
\newcommand{\pdf}{\mathbb{P}}
\newcommand{\deri}{\mathrm{d}}
\newcommand{\regret}{R_T}
\newcommand{\regretK}{R_K}
\begin{document}
\title{
	Minimax Optimal Online Stochastic Learning for Sequences of Convex Functions under Sub-Gradient Observation Failures}
\author{
	Hakan~Gokcesu, 
	and~Suleyman~S.~Kozat,~\IEEEmembership{Senior~Member,~IEEE}
	\thanks{
		H. Gokcesu and Suleyman S. Kozat are with the Department of Electrical and Electronics Engineering, Bilkent University, Ankara, Turkey; e-mail: \{hgokcesu, kozat\}@ee.bilkent.edu.tr, tel: +90 (312) 290-2336.
	}
	\footnote{
		\copyright˜2019 	IEEE.  Personal use of this material is permitted.  Permission from IEEE must be obtained for all other uses, in any current or future media, including reprinting/republishing this material for advertising or promotional purposes, creating new collective works, for resale or redistribution to servers or lists, or reuse of any copyrighted component of this work in other works.
	}
}
\maketitle

\begin{abstract}
	We study online convex optimization under stochastic sub-gradient observation faults, where we introduce adaptive algorithms with minimax optimal regret guarantees.
	We specifically study scenarios where our sub-gradient observations can be noisy or even completely missing in a stochastic manner.
	To this end, we propose algorithms based on
	sub-gradient descent method, which achieve tight minimax optimal regret bounds. When necessary, these algorithms utilize properties of the underlying stochastic settings to optimize their learning rates (step sizes). These optimizations are the main factor in providing the minimax optimal performance guarantees, especially when observations are stochastically missing. However, in real world scenarios,
	these properties of the underlying stochastic settings may not be revealed to the optimizer. 
	For such a scenario, we propose a blind algorithm 
	that estimates these properties empirically in a generally applicable manner. Through extensive experiments, we show that this empirical approach is a natural combination of regular stochastic gradient descent and the minimax optimal algorithms (which work best for randomized and adversarial function sequences, respectively).
\end{abstract}

\begin{IEEEkeywords}
	Online learning, stochastic convex optimization, sub-gradient descent, missing/noisy feedback, minimax optimal
\end{IEEEkeywords}

\newpage\section{Introduction}
In online learning, a parameter vector of interest is optimized sequentially based on the feedback coming from the environment. 
A widely used approach for this purpose
is to employ iterative decision update methods such as online gradient descent. These approaches are shown to have worst-case optimal performance under convex target functions when the sub-gradients are fully observed and utilized 
\cite{Shalev,Hazan}. 
However, a substantial number of real-life optimization problems 
have limitations in their learning procedures regarding their observations of the environment vectors, which can be erroneous or noisy -especially in 
applications concerning compressed or privatized data- such as matrix recovery \cite{Gao}, signal reconstruction \cite{Lu}, parameter estimation \cite{Pakrooh}, time-frequency signature extraction \cite{Jokanovic}. 
Furthermore, some environmental data may remain unobserved or missing. Such can be the case in $H_{\infty}$ control/filtering \cite{Wang,Shen_H} and ARMA modeling \cite{Giannakis} with missing measurements, or
big data applications of low-dimensional sketching \cite{Shen_big}, data completion \cite{Mardani} and anomaly detection \cite{gokcesuAnomaly}.

To this end, we investigate the problem of learning under faulty feedback scenarios where environment vectors stochastically fail to be observed or the observations incorporate arbitrary zero-mean noise, for which we provide algorithms with worst-case optimal guarantees. In our solutions, we only assume the target functions to be convex, which does not necessitate additional properties such as smoothness or strong convexity, thus making them widely applicable. 
Our learning setting can be aptly called online stochastic convex optimization as the limitations in feedback brings stochasticity to our decision updates. 
In this work, we aim to construct a general convex optimization framework, which permits stochastic observation errors. This construction is applicable to a myriad of scenarios after the target functions (losses, errors, costs etc.) are modeled to be convex. Moreover, we demonstrate how our approaches provide tight optimality guarantees, making them 
robust to the observations (incoming data stream). 

\subsection{Prior Art and Comparisons}
In learning over time-series, 
a case of limited feedback scenario is called prediction under missing data, where the samples are placed apart with non-uniform intervals (e.g., aperiodic sampling). For this purpose, a specific linear filtering model of auto-regression is utilized in \cite{Anava}, which imputes its estimates to substitute for the missing data. They obtain performance guarantees 
with respect to the times where data is available. As opposed to this, we propose guarantees for the whole time horizon 
in an expected sense. 
Furthermore, another case of limited feedback setting is to learn under missing features as in \cite{Afshin}. The authors point out a common real world example of data corruption caused by sensor failure and consider feature grouping (like an imputation of averages) to compensate for information deficiency. However, unlike our method, this approach has no 
worst-case guarantees. 

Other topics of interest in the limited feedback setting are the active learning framework where the user has control over the observability of data \cite{Hanneke,gokcesuBandit}, the deterministic limited information setting \cite{Alon}, partial monitoring \cite{Cesa-Bianchi}, limited-attribute observation model \cite{Bullins} and bandit approximation for gradients \cite{Flaxman}. All such problems work under the limitation of learning with faulty or partial feedback, and our work complements these 
with its setting of stochastic observation faults. 
Prior works in this direction include 
the probabilistic optimization in \cite{Hennig}, which imposes the assumption of independent zero-mean Gaussian noises on each data coordinate. This assumption is much stronger than our zero-mean noise model. 
Another work is optimization with biased noisy gradient oracles in \cite{Hu}. It requires both smooth target functions 
and their repeated querying, which is possibly infeasible when each evaluation is costly, as is the case in most real life online learning applications. In contrast, our approach only needs the functions to be convex and evaluates each of them once.

We study the problem of limited information feedback with unbiased stochastic sub-gradient observation faults. Unlike previous approaches, the target convex functions need not display additional properties such as strong-convexity or Lipschitz-smoothness. Moreover, in the worst-case, 
we would not be allowed repeated evaluation of a noisy sub-gradient from a fixed function under this possibly chaotic or even adversarial setting. Our approach of optimization is efficient with constant memory and per-round time complexities and need not assume any nature on the observation noise. In this scenario, we may observe noisy environmental vectors (i.e. corrupted sub-gradients) corresponding to our decisions at each optimization round, or may even probabilistically fail to observe at all (i.e. missing sub-gradients). Our results are minimax optimal such that the lower and upper bounds of the difference between cumulative losses of our decisions and the best decision in hindsight is at most a constant factor in the worst-case scenario, regardless of possibly arbitrarily large decision set or sub-gradient norms.

\subsection{Contributions}
Our contributions are listed as follows: 
\begin{itemize}
	\item We introduce an adaptive online learning algorithm for convex functions, which is able to utilize sub-gradient feedback with arbitrary zero-mean noises and achieve minimax optimal regret guarantees.
	\item 
	Our algorithm works in general stochastic observability schemes with arbitrary conditional observation probabilities, which may also be randomly generated by arbitrary priors. We also prove the minimax optimality of the regret achieved by our design under this general stochastically missing sub-gradient feedback structure.
	\item The minimax optimality guarantees are shown for both first and second moments of the regret. These guarantees 
	optimally bound the mean and variance of our redundant loss against the best decision in hindsight, hence, introducing further stability to our performance.
	\item We improve upon \cite{Flaxman} and \cite{He} by introducing minimax optimal sub-gradient norm adaptivity for unbiased noisy gradient descent and eliminating the requirement of a constant probability for stochastic observation failures.
	\item 
	We further introduce an empirical method when probabilities to observe, or their priors, are inaccessible. This method naturally combines the standard stochastic gradient descent and our minimax approaches such that its applicability in the absence of knowledge about the stochastic setting is validated with detailed experiments on both adversarial and randomized function sequences.
\end{itemize} 

\subsection{Organization}
The rest of this paper is constructed as follows. 
In Section \ref{sec:oll_problem} and \ref{section:adagrad},
we formally define the problem and show 
an efficient minimax optimal adaptive projected sub-gradient descent method. 
In Section \ref{section:unbiasedgrad}, we show that the regret of this algorithm has minimax optimal 
first and second moments 
when the sub-gradients are observed with an additive zero-mean noise. 
In Section \ref{section:missing_gradient}, we show how to utilize the same algorithm when we may fail to make sub-gradient observations with a known probability of observation. 
In Section \ref{section:prior}, we consider the case when we only know a prior distribution, which randomly generates a probability to observe after each observation. In Section \ref{section:empirical}, we introduce a completely online 
and 
adaptive empirical method 
for when 
both conditional probabilities and even corresponding priors are unknown. Finally, in Section \ref{section:experiments} and \ref{sec:conclusion}, we present the experimental results 
and finish with concluding remarks.

\section{Problem Formulation and Notation} \label{sec:oll_problem}
We have a sequence of convex functions $f_t: \Set \rightarrow \reals$ for integers times $t \geq 1$, where $\Set$ is a convex subspace of $\reals^{\dimcnt}$, i.e. for all $\decision,\altdec \in \Set$, $(\lambda \decision + (1-\lambda)\altdec) \in \Set$ for any $0\leq \lambda\leq 1$. 
Members of the set $\Set$ are column vectors. For $\decision,\altdec \in \Set$, their inner product is $\decision^T \altdec$ where $\decision^T$ is the transpose of $\decision$. The norm (euclidean) of $\decision\in\Set$ is $\|\decision\| = \sqrt{\decision^T \decision} \geq 0$. We also define the projection operation $\proj{\Set}{\decision}$, which solves $\argmin{\altdec \in \Set} \|\decision-\altdec\|^2$, a relatively simple computation when $\Set$ is a hyper-ellipsoid or hyper-rectangle.  

Convexity of each $f_t(\cdot)$ implies the first-order inequality
\begin{equation} \label{eq:convexity}
f_t(\decision) - f_t(\altdec) \leq \loss_t^T (\decision - \altdec)
\end{equation}
for every pair $\decision,\altdec \in \Set$ and every sub-gradient 
$\loss_t \in \subgrads{f_t}{\decision}$.

The \emph{regret}, denoted as $\regret$, is defined as
\begin{equation} \label{eq:linearized_regret}
\regret
\define \sum_{t=1}^T f_t(\decision_t) - f_t(\decision^*) \leq \sum_{t=1}^T \loss_t^T (\decision_t - \decision^*),
\end{equation}
where $\{\decision_t\}_{t=1}^T$ is the decision sequence produced by the optimizer and $\decision^*$ is the best fixed decision in hindsight. The inequality comes from \eqref{eq:convexity} for any $\loss_t \in \subgrads{f_t}{\decision_t}$. This is a tight bound when no further assumptions are made about $f_t(\cdot)$ 
and holds with equality for linear functions $f_t(\decision) = \langle \loss_t, \decision \rangle$. 

In stochastic feedback setting, the decision sequence $\{\decision_t\}_{t=1}^T$ is determined with information exceedingly limited compared to 
observing $\loss_t \in \subgrads{f_t}{\decision_t}$. We may observe a noisy version $\observe_t$ with only its conditional expectation -given all past functions, decisions, feedbacks, underlying stochastic properties etc.- equaling $\loss_t$, i.e. $\E{t}{\observe_t} = \loss_t$. Moreover, it is also not guaranteed that we make an observation at every $t$. Consequently, also considering the worst-case scenario by an 
arbitrary function sequence, 
we want to optimally update our decisions after each observation, possibly corrupted by noise.

\renewcommand{\figurename}{Alg.}
\begin{figure}[!t]
	\centering
	\caption{Online Sub-Gradient Descent with Projection}\label{algorithm:gradient_descent}
	\begin{algorithmic}[1]
		\REQUIRE $\decision_1 \in \Set$, $\loss_t \in \subgrads{f_t}{\decision_t}$ for $t\geq1$.
		\ENSURE $\decision_t \in \Set$ for $t \geq 2$.
		\STATE Initialize $t=1$. 
		\WHILE{not terminated}
		\STATE Observe $\loss_t \in \subgrads{f_t}{\decision_t}$. 
		\STATE Decide $\eta_t$.
		\STATE Set $\decision_{t+1} = \proj{\Set}{\decision_t - \eta_t \loss_t}$. 
		\STATE $t \leftarrow t+1$.
		\ENDWHILE
	\end{algorithmic}
\end{figure} 
\section{Online Projected Gradient Descent} \label{section:adagrad} 
We use online sub-gradient descent with projection framework  as shown in \figurename \ref{algorithm:gradient_descent} for decision update. Together with the next theorem for the noiseless feedback scenario, this framework becomes the building block for the minimax optimal methods we derive in the noisy feedback sections to follow.

\begin{theorem} \label{theorem:gradient_descent_regret}
	If we use \figurename \ref{algorithm:gradient_descent} with a nonincreasing positive real $\eta_t$ sequence, we can guarantee the nonnegative regret bound
	\begin{displaymath}
		\regret \leq \frac{D^2}{2\eta_T} + \sum_{t=1}^T \frac{\eta_t}{2} \|\loss_t\|^2,
	\end{displaymath}
	where $D = \max_{\decision,\altdec \in \Set} \|\decision-\altdec\|$ is the diameter of $\Set$, the feasible convex decision set.
\end{theorem}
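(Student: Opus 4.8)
The plan is to use the standard potential-function (telescoping) argument based on the squared distance $\|\decision_t - \decision^*\|^2$ to the comparator, which is the canonical proof technique for projected sub-gradient descent. First I would fix an arbitrary $\decision^* \in \Set$ and consider the one-step change in the potential $\Phi_t \define \|\decision_t - \decision^*\|^2$. Using the update rule $\decision_{t+1} = \proj{\Set}{\decision_t - \eta_t \loss_t}$ together with the nonexpansiveness of the Euclidean projection onto the convex set $\Set$ (i.e. $\|\proj{\Set}{\decision} - \decision^*\| \leq \|\decision - \decision^*\|$ for any $\decision^* \in \Set$, which follows from the defining variational inequality of the projection), I get
\begin{displaymath}
\|\decision_{t+1} - \decision^*\|^2 \leq \|\decision_t - \eta_t \loss_t - \decision^*\|^2 = \|\decision_t - \decision^*\|^2 - 2\eta_t \loss_t^T(\decision_t - \decision^*) + \eta_t^2 \|\loss_t\|^2.
\end{displaymath}
Rearranging isolates the per-round linearized regret term:
\begin{displaymath}
\loss_t^T(\decision_t - \decision^*) \leq \frac{\|\decision_t - \decision^*\|^2 - \|\decision_{t+1} - \decision^*\|^2}{2\eta_t} + \frac{\eta_t}{2}\|\loss_t\|^2.
\end{displaymath}

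Next I would sum this inequality over $t = 1, \dots, T$ and invoke \eqref{eq:linearized_regret} to bound $\regret$ by $\sum_{t=1}^T \loss_t^T(\decision_t - \decision^*)$. The second term sums directly to $\sum_{t=1}^T \frac{\eta_t}{2}\|\loss_t\|^2$, matching the desired bound. The first term is the one needing care: because the weights $1/(2\eta_t)$ vary with $t$, it is not a clean telescope. I would handle it by Abel summation — writing $\sum_{t=1}^T \frac{1}{2\eta_t}(\Phi_t - \Phi_{t+1})$ and regrouping so the coefficient of $\Phi_t$ becomes $\frac{1}{2\eta_t} - \frac{1}{2\eta_{t-1}}$ for $2 \leq t \leq T$, plus boundary terms $\frac{1}{2\eta_1}\Phi_1$ and $-\frac{1}{2\eta_T}\Phi_{T+1}$. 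Since $\eta_t$ is nonincreasing, each coefficient $\frac{1}{2\eta_t} - \frac{1}{2\eta_{t-1}} \geq 0$, and since $\Phi_t = \|\decision_t - \decision^*\|^2 \leq D^2$ by definition of the diameter $D$, I can upper bound the whole sum by $\frac{D^2}{2\eta_1} + D^2\sum_{t=2}^T\left(\frac{1}{2\eta_t} - \frac{1}{2\eta_{t-1}}\right) = \frac{D^2}{2\eta_T}$, after dropping the nonnegative $\frac{1}{2\eta_T}\Phi_{T+1}$ term. This yields $\regret \leq \frac{D^2}{2\eta_T} + \sum_{t=1}^T \frac{\eta_t}{2}\|\loss_t\|^2$.

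**The main obstacle** is the nonuniform weighting of the telescoping terms; with a constant step size $\eta$ the first sum collapses to $\frac{\Phi_1 - \Phi_{T+1}}{2\eta} \leq \frac{D^2}{2\eta}$ immediately, but for a general nonincreasing $\eta_t$ one genuinely needs the Abel-summation step and the sign condition coming from monotonicity of $\eta_t$. A minor additional point is the nonnegativity claim for the regret bound: the term $\sum_{t=1}^T \frac{\eta_t}{2}\|\loss_t\|^2$ is manifestly nonnegative since $\eta_t > 0$, so the stated bound is always a nonnegative quantity, which I would note in passing. I would also make explicit at the outset that the projection nonexpansiveness holds because $\decision^* \in \Set$, so that the inequality $\|\proj{\Set}{\decision_t - \eta_t\loss_t} - \decision^*\| \leq \|\decision_t - \eta_t\loss_t - \decision^*\|$ is valid for the comparator appearing in the regret.
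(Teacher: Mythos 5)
Your proposal is correct and follows essentially the same route as the paper: bound the linearized regret via projection nonexpansiveness and the expansion of $\|\decision_t - \eta_t\loss_t - \decision^*\|^2$, then regroup the weighted differences of $\|\decision_t-\decision^*\|^2$ (your Abel summation is exactly the paper's ``regroup and upper-bound by the set diameter, using that $\eta_t$ is nonincreasing'') to obtain the telescoping sum. Your write-up simply makes explicit the steps the paper compresses into ``after some algebra.''
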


\begin{proof}
	Define $\altdec_{t+1} = \decision_t - \eta_t \loss_t$ with $\decision_{t+1} = \proj{\Set}{\altdec_{t+1}}$. 
	After some algebra following the substitution of $\loss_t$ in \eqref{eq:linearized_regret}, 
	\begin{equation*}
		\regret \leq \sum_{t=1}^T \frac{1}{2\eta_t} (\|\decision_t - \decision^*\|^2 - \|\decision_{t+1} - \decision^*\|^2) + \frac{\eta_t}{2} \|\loss_t\|^2,
	\end{equation*}
	since $\|\decision_{t+1} - \decision^*\| \leq \|\altdec_{t+1} - \decision^*\|$ for $\decision^* \in \Set$ \cite{Bubeck}. 
	After we regroup for each $\|\decision_t - \decision^*\|$ and upper-bound them with the set diameter (since $\eta_t$ is nonincreasing), we obtain a telescoping sum, which, after simplification, concludes the proof. 
\end{proof}
Theorem \ref{theorem:gradient_descent_regret} tells us 
that we need to tune the step sizes to optimize the regret (performance). The following corollary gives us the step sizes for the minimax optimal regret.

\begin{corollary} \label{corollary:gradient_descent_adaptive_eta}
	If we use $\eta_t = \sqrt{1/2} D G_t^{-1}$, we obtain
	\begin{displaymath}
		\regret
		\leq \sqrt{2} D G_T
		= \sqrt{2} D \sqrt{\sum_{t=1}^{T} \|\loss_t\|^2},
	\end{displaymath}
	where $G_t 
	\define \sqrt{G_{t-1}^2 + \|g_{t}\|^2}
	= \sqrt{\sum_{\tau=1}^t \|g_\tau\|^2}$ with $G_0 = 0$.
\end{corollary}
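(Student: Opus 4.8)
The plan is to substitute the proposed step sizes directly into the bound of Theorem~\ref{theorem:gradient_descent_regret} and then control the resulting sum with a standard summation lemma. First I would check admissibility of $\eta_t = \sqrt{1/2}\,D\,G_t^{-1}$: since $G_t = \sqrt{G_{t-1}^2 + \|\loss_t\|^2}$ is nondecreasing in $t$, the sequence $\eta_t$ is nonincreasing and positive whenever $G_t > 0$ (if every observed sub-gradient vanishes then $\regret \le 0$ and $G_T = 0$, so the claimed inequality holds trivially), so Theorem~\ref{theorem:gradient_descent_regret} applies. Plugging $\eta_T = D/(\sqrt{2}\,G_T)$ into the first term gives $\frac{D^2}{2\eta_T} = \frac{D G_T}{\sqrt{2}}$, and plugging $\eta_t$ into the second term gives $\sum_{t=1}^T \frac{\eta_t}{2}\|\loss_t\|^2 = \frac{D}{2\sqrt{2}}\sum_{t=1}^T \frac{\|\loss_t\|^2}{G_t}$.

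The crux is therefore the inequality $\sum_{t=1}^T \frac{\|\loss_t\|^2}{G_t} \le 2 G_T$. I would prove this by a telescoping argument. Set $a_t = \|\loss_t\|^2$, so $G_t^2 = \sum_{\tau=1}^t a_\tau$ and $G_{t-1}^2 = G_t^2 - a_t$. The elementary bound $\sqrt{x} - \sqrt{x-a} = \frac{a}{\sqrt{x}+\sqrt{x-a}} \ge \frac{a}{2\sqrt{x}}$, valid for $0 \le a \le x$ since $\sqrt{x-a} \le \sqrt{x}$, applied with $x = G_t^2$ and $a = a_t$, yields $\frac{a_t}{G_t} \le 2\,(G_t - G_{t-1})$. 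Summing over $t = 1,\dots,T$ telescopes to $\sum_{t=1}^T \frac{a_t}{G_t} \le 2\,(G_T - G_0) = 2 G_T$, using $G_0 = 0$.

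Combining the two pieces, the second term is at most $\frac{D}{2\sqrt{2}}\cdot 2 G_T = \frac{D G_T}{\sqrt{2}}$, so $\regret \le \frac{D G_T}{\sqrt{2}} + \frac{D G_T}{\sqrt{2}} = \sqrt{2}\,D\,G_T = \sqrt{2}\,D\sqrt{\sum_{t=1}^T \|\loss_t\|^2}$, which is exactly the claimed bound. The only genuinely non-routine ingredient is the summation lemma $\sum_t \|\loss_t\|^2/G_t \le 2 G_T$; the rest is direct substitution and arithmetic. As a sanity check, the balance between the two terms (each equal to $D G_T/\sqrt{2}$) is what one expects from choosing $\eta_t$ to equalize the "bias" and "variance'' parts of the Theorem~\ref{theorem:gradient_descent_regret} bound, and the $\sqrt{\sum_t\|\loss_t\|^2}$ dependence is the sub-gradient-norm-adaptive form of the standard $\sqrt{T}$ online gradient descent rate that underlies the minimax optimality claims developed later in the paper.
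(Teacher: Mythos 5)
Your proof is correct and follows essentially the same route as the paper: your elementary bound $\sqrt{x}-\sqrt{x-a}\geq a/(2\sqrt{x})$ is algebraically identical to the paper's ``difference of two squares'' step $\|\loss_t\|^2 = (G_t-G_{t-1})(G_t+G_{t-1})$ followed by $G_t+G_{t-1}\leq 2G_t$, and both arguments then telescope to $2G_T$ and balance the two terms at $DG_T/\sqrt{2}$ each. Your handling of the degenerate case is also consistent with the paper's observation that zero regret is incurred until $G_t>0$.
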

\begin{proof} 
	$G_t$ is a nondecreasing nonnegative sequence. Until $G_t > 0$ for some $t=t_0$, we incur $0$ regret. For $t \geq t_0$, $\eta_t$ becomes a nonincreasing sequence and we can build upon Theorem \ref{theorem:gradient_descent_regret}. 
	Then, $\|\loss_t\|^2 = G_t^2 - G_{t-1}^2$ where $t \geq 1$. Combined with Theorem \ref{theorem:gradient_descent_regret} and "difference of two squares",
	\begin{equation*}
		\regret \leq 
		\frac{D^2}{2\eta_T} + \sum_{t=t_0}^T \frac{\eta_t}{2} (G_t - G_{t-1})(G_t + G_{t-1}).
	\end{equation*}
	As $\eta_t$'s are positive and $G_t$'s are nondecreasing, we upper-bound by replacing $(G_t + G_{t-1})$ with $2G_t$. Then, we use $\eta_t = \sqrt{1/2} D G_t^{-1}$ and obtain a telescoping sum, which gives the bound after negative values are bounded with $0$.
\end{proof}
The upper bound (guarantee) in Corollary \ref{corollary:gradient_descent_adaptive_eta} matches the minimax regret lower bounds given in \cite[Theorem 5]{Abernethy} and \cite[Theorem 5]{Orabona} within a constant positive multiplier, meaning, it cannot be improved further order-wise.

\section{Performance for Noisy Gradients} \label{section:unbiasedgrad}
In this section, we consider the case where $\loss_t$ are observed with noise. Assuming we observe an unbiased estimator $\observe_t$ instead, we investigate how the regret of our algorithm is affected. We improve the results in 
\cite[Section 3]{Flaxman} by achieving temporally-adaptive minimax-optimal bounds for the first and second moments of the cumulative regret under noisy sub-gradient observations. 

We start by denoting the noise at $t$ as
\begin{equation} \label{eq:noise_t}
\noise_t \define \observe_t - \loss_t.
\end{equation}
Using \eqref{eq:noise_t}, we substitute for $\loss_t$ in \eqref{eq:linearized_regret}, which gives
\begin{equation} \label{eq:noisy_regret}
\regret 
\leq \sum_{t=1}^T \observe_t^T(\decision_t - \decision^*) - \sum_{t=1}^T \gamma_t^T (\decision_t - \decision^*).
\end{equation}

Next, we generate the regret guarantees in terms of the first and second moments in the following two subsections.

\subsection{First Moment (Expected Regret) Guarantee} \label{subsection:E[R_T]_unbiased}
\begin{lemma} \label{lemma:tildebound}
	If we run the algorithm in \figurename \ref{algorithm:gradient_descent} with $\eta_t$ as in Corollary \ref{corollary:gradient_descent_adaptive_eta} after substituting $\loss_t$ with $\observe_t$, we obtain
	\begin{equation*}
	\sum_{t=1}^T \observe_t^T (\decision_t - \decision^*) \leq \sqrt{2} D \sqrt{\sum_{t=1}^T \|\observe_t\|^2}.
	\end{equation*}
\end{lemma}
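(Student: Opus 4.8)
The plan is to observe that neither Theorem~\ref{theorem:gradient_descent_regret} nor Corollary~\ref{corollary:gradient_descent_adaptive_eta} actually uses the fact that $\loss_t$ is a sub-gradient of $f_t$. Their proofs only rely on the update recursion $\decision_{t+1} = \proj{\Set}{\decision_t - \eta_t \loss_t}$, the non-expansiveness of the projection onto $\Set$ (so that $\|\decision_{t+1} - \altdec\| \leq \|\decision_t - \eta_t \loss_t - \altdec\|$ for every fixed $\altdec \in \Set$), and pure algebra in the quantities $\|\loss_t\|^2$ and $\|\decision_t - \altdec\|^2$. In particular, the object bounded there is $\sum_{t=1}^T \loss_t^T(\decision_t - \altdec)$ for an arbitrary fixed $\altdec \in \Set$, not the regret $\regret$ itself; the passage to $\regret$ in \eqref{eq:linearized_regret} is made only afterwards, via convexity.

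Hence I would re-run the argument with $\observe_t$ playing the role of $\loss_t$ and $\decision^*$ playing the role of $\altdec$. Concretely, set $\widetilde{G}_t \define \sqrt{\widetilde{G}_{t-1}^2 + \|\observe_t\|^2} = \sqrt{\sum_{\tau=1}^t \|\observe_\tau\|^2}$ with $\widetilde{G}_0 = 0$, and note that the step size in the statement, $\eta_t = \sqrt{1/2}\, D\, \widetilde{G}_t^{-1}$, is exactly the one prescribed by Corollary~\ref{corollary:gradient_descent_adaptive_eta} when the feedback sequence is $\{\observe_t\}_{t=1}^T$. Since the optimizer now generates iterates by $\decision_{t+1} = \proj{\Set}{\decision_t - \eta_t \observe_t}$, which is precisely \figurename~\ref{algorithm:gradient_descent} driven by $\{\observe_t\}_{t=1}^T$, applying Theorem~\ref{theorem:gradient_descent_regret} and then Corollary~\ref{corollary:gradient_descent_adaptive_eta} verbatim yields
\[
\sum_{t=1}^T \observe_t^T(\decision_t - \decision^*) \leq \sqrt{2}\, D\, \widetilde{G}_T = \sqrt{2}\, D \sqrt{\sum_{t=1}^T \|\observe_t\|^2},
\]
which is the claim.

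There is essentially no hard step: the only point requiring care is the logical one, namely that the earlier results bound the linearized expression $\sum_t (\cdot)_t^T(\decision_t - \decision^*)$ for whatever vectors $(\cdot)_t$ drive the descent recursion, so substituting the noisy observations is legitimate even though $\observe_t$ need not lie in $\subgrads{f_t}{\decision_t}$ and even though $\decision^*$ is the minimizer of the true cumulative loss rather than of anything built from $\observe_t$. I would also emphasize that this inequality is deterministic/pathwise — it holds for every realization of the noise — which is what allows it to be combined with \eqref{eq:noisy_regret} before taking expectations in the subsequent first-moment analysis. A minor edge case, inherited directly from Corollary~\ref{corollary:gradient_descent_adaptive_eta}, is the treatment of the initial rounds before $\widetilde{G}_t$ becomes strictly positive, where both sides vanish and the value of $\eta_t$ is immaterial; a single sentence handles it.
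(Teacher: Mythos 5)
Your proposal is correct and follows essentially the same route as the paper's own (very terse) proof: both rest on the observation that Theorem~\ref{theorem:gradient_descent_regret} and Corollary~\ref{corollary:gradient_descent_adaptive_eta} actually bound the linearized sum $\sum_{t=1}^T \loss_t^T(\decision_t - \decision^*)$ for whatever vectors drive the update recursion, so replacing $\loss_t$ by $\observe_t$ throughout gives the claim verbatim. Your write-up merely makes explicit the pathwise nature of the bound and the edge case of early rounds, which the paper leaves implicit.
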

\begin{proof}
	It follows after substituting for $\loss_t$ with $\observe_t$ and considering that Corollary \ref{corollary:gradient_descent_adaptive_eta} results from \eqref{eq:linearized_regret} via Theorem \ref{theorem:gradient_descent_regret}.
\end{proof}

\begin{corollary} \label{corollary:expected_regret_unbiased}
	If we run the algorithm in \figurename \ref{algorithm:gradient_descent} with optimal adaptive $\eta_t$ for the noisy sub-gradient sequence $\{\observe_t\}_{t=1}^T$, we have the following expected regret guarantee:
	\begin{displaymath}
		\E{1:T}{\regret} 
		\leq \sqrt{2} D \;\E{1:T}{\sqrt{\sum_{t=1}^{T} \|\observe_t\|^2}} 
	\end{displaymath}
	where $\E{1:T}{\cdot}$ is the overall expectation 
	with the expected value of $\gamma_t$ conditioned to past is zero, i.e. $\E{t}{\gamma_t} = 0$. 
\end{corollary}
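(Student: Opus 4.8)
The plan is to turn the pathwise inequality \eqref{eq:noisy_regret} into an expectation bound by showing that the additive noise contribution averages to zero. First I would feed the run of \figurename~\ref{algorithm:gradient_descent} on the noisy sub-gradients $\{\observe_t\}$ into Lemma~\ref{lemma:tildebound}, so that the first sum in \eqref{eq:noisy_regret} is dominated by $\sqrt{2}\,D\sqrt{\sum_{t=1}^T\|\observe_t\|^2}$. Hence, on every sample path,
\[
\regret \;\leq\; \sqrt{2}\,D\sqrt{\sum_{t=1}^T\|\observe_t\|^2} \;-\; \sum_{t=1}^T \gamma_t^T(\decision_t-\decision^*).
\]
Taking $\E{1:T}{\cdot}$ on both sides, the claim reduces to establishing $\E{1:T}{\sum_{t=1}^T\gamma_t^T(\decision_t-\decision^*)}=0$.

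For that term I would invoke the tower rule, conditioning on the $\sigma$-algebra generated by all information available before round $t$ (past functions, decisions, feedbacks). By the causal structure of \figurename~\ref{algorithm:gradient_descent}, $\decision_t$ is a function of $\decision_1$ and $\observe_1,\dots,\observe_{t-1}$ only, hence is measurable with respect to that $\sigma$-algebra; and $\decision^*=\argmin{u\in\Set}\sum_{t=1}^T f_t(u)$ depends only on the loss sequence, which (being chosen obliviously, or at least with $f_t$ revealed before $\gamma_t$, as in Section~\ref{sec:oll_problem}) carries no dependence on $\gamma_t$. Therefore $\E{t}{\gamma_t^T(\decision_t-\decision^*)}=(\decision_t-\decision^*)^T\,\E{t}{\gamma_t}=0$; summing over $t$ and taking the outer expectation gives $\E{1:T}{\sum_{t=1}^T\gamma_t^T(\decision_t-\decision^*)}=0$, and substituting back into the displayed inequality proves the corollary.

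The one delicate point, and essentially the only content beyond bookkeeping, is the non-anticipativity of $\decision_t$ and $\decision^*$ with respect to $\gamma_t$, which is what lets the conditional expectation factor out; for $\decision_t$ this is immediate from the sequential nature of the descent update, whereas for $\decision^*$ it is precisely where the worst-case-over-arbitrary-function-sequences formulation of Section~\ref{sec:oll_problem} is used. Everything else is linearity of expectation and the tower rule, so no calculation is needed; in particular the term $\sqrt{\sum_{t=1}^T\|\observe_t\|^2}$ is simply carried along inside $\E{1:T}{\cdot}$, which is why the guarantee keeps that non-separable form rather than collapsing into a sum of per-round expectations.
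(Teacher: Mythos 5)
Your proposal is correct and follows essentially the same route as the paper: combine \eqref{eq:noisy_regret} with Lemma~\ref{lemma:tildebound}, take the overall expectation, and kill the cross term $\sum_t \gamma_t^T(\decision_t-\decision^*)$ via the tower rule using $\E{t}{\gamma_t}=0$, the measurability of $\decision_t$ with respect to the past, and the fact that $\decision^*$ can be treated as a fixed comparator (the paper phrases this last point as the bound holding for every fixed $\decision^*\in\Set$, which is the same resolution of the delicate point you flag).
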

\begin{proof}
	$\eta_t$ is selected in accordance with Corollary \ref{corollary:gradient_descent_adaptive_eta}. Combining \eqref{eq:noisy_regret} with Lemma \ref{lemma:tildebound} and applying 
	$\E{1:T}{\cdot}$ 
	results in
	\begin{equation*}
		\E{1:T}{\regret} \leq \E{1:T}{\sqrt{2} D \sqrt{\sum_{t=1}^T \|\observe_t\|^2} - \sum_{t=1}^T \gamma_t^T (\decision_t - \decision^*)}.
	\end{equation*}
	The component
	$\sum_{t=1}^{T} \gamma_t^T (\decision_t - \decision^*)$ vanishes inside $\E{1:T}{\cdot}$ since the conditional expectation $\E{t}{\gamma_t} = 0$ results in $\E{t}{\gamma_t^T (\decision_t - \decision^*)} = 0$ for the following reasons. Firstly, the conditioning 
	includes our decision $\decision_t$. Secondly, $\decision^*$ can be treated as being defined at the start $t=1$, hence independent of the random events from $1\leq t\leq T$, by noticing that generation of the performance bounds 
	only assume $\decision^* \in \Set$, and are valid for all 
	such $\decision^*$. 
	This concludes the proof for this corollary. 
\end{proof}

\subsection{Second Moment (Expected Squared Regret) Guarantee} 
To bound the second moment, we incorporate 
Lemma \ref{lemma:tildebound} into \eqref{eq:noisy_regret},
and square both sides. Since $(a+b)^2 \leq 2a^2 + 2b^2$, we get
\begin{equation} \label{eq:R_T^2}
R_T^2 \leq 4D^2 \sum_{t=1}^T \|\observe_t\|^2 + 2\left(\sum_{t=1}^T \gamma_t^T (\decision_t - \decision^*)\right)^2.
\end{equation} 

\begin{lemma} \label{lemma:expected_squared_noise_regret}
	Given the overall expectation $\E{1:T}{\cdot}$,
	\begin{displaymath}
	\E{1:T}{\left(\sum_{t=1}^T \gamma_t^T (\decision_t - \decision^*)\right)^2} \leq D^2 \;\E{1:T}{\sum_{t=1}^T \sigma_t^2},
	\end{displaymath}
	where $\sigma_t^2 = \E{t}{\|\gamma_t\|^2}$, which is an aggregated noise variance measure for the unbiased estimator $\observe_t$ according to \eqref{eq:noise_t}.
\end{lemma}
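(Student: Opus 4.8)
The plan is to expand the square into a double sum and exploit the martingale-difference structure of the noise. Writing
\begin{equation*}
\left(\sum_{t=1}^T \gamma_t^T (\decision_t - \decision^*)\right)^2
= \sum_{t=1}^T \sum_{s=1}^T \big(\gamma_t^T (\decision_t - \decision^*)\big)\big(\gamma_s^T (\decision_s - \decision^*)\big),
\end{equation*}
I would split the terms into the diagonal ones ($s=t$) and the off-diagonal ones ($s \neq t$), and show that under $\E{1:T}{\cdot}$ every off-diagonal term vanishes while each diagonal term is bounded by $D^2 \sigma_t^2$.

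For an off-diagonal term, say with $s < t$, I would condition on all the information available up through the start of round $t$. As in the proof of Corollary~\ref{corollary:expected_regret_unbiased}, this conditioning includes the past noise $\gamma_s$, the decisions $\decision_s$ and $\decision_t$ (which depend only on observations strictly before round $t$), and $\decision^*$ (which, since the performance bounds hold for every $\decision^* \in \Set$, may be treated as fixed at $t=1$). Thus $\gamma_s^T(\decision_s - \decision^*)$ is a constant under this conditioning and $\decision_t - \decision^*$ is a fixed vector, so the conditional expectation factors as $\big(\gamma_s^T(\decision_s - \decision^*)\big)\,\E{t}{\gamma_t}^T(\decision_t - \decision^*)$, which is $0$ because $\E{t}{\gamma_t} = 0$. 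By the tower rule the off-diagonal contribution to $\E{1:T}{\cdot}$ is zero.

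For a diagonal term, I would again condition on the information up through round $t$, so that $\decision_t - \decision^*$ is a fixed vector with $\|\decision_t - \decision^*\| \leq D$ by definition of the diameter of $\Set$. Cauchy--Schwarz then gives $\big(\gamma_t^T(\decision_t - \decision^*)\big)^2 \leq \|\gamma_t\|^2 \|\decision_t - \decision^*\|^2 \leq D^2 \|\gamma_t\|^2$, and taking the conditional expectation yields $\E{t}{\big(\gamma_t^T(\decision_t - \decision^*)\big)^2} \leq D^2 \sigma_t^2$. Summing over $t$ and applying $\E{1:T}{\cdot}$ through the tower rule produces $D^2 \E{1:T}{\sum_{t=1}^T \sigma_t^2}$, as claimed. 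The only delicate point — more a matter of careful statement than of real difficulty — is the measurability bookkeeping: making precise that $\decision_t$ is determined by the feedback from rounds before $t$ and hence lies in the conditioning $\sigma$-algebra, and that $\decision^*$ may be regarded as non-random; this is exactly the reasoning already used for Corollary~\ref{corollary:expected_regret_unbiased}, so I would reuse it rather than re-derive it.
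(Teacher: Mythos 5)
Your proposal is correct and follows essentially the same route as the paper: the paper's $A_T$ is exactly your diagonal sum (bounded via the rank-one structure of $\gamma_t\gamma_t^T$, which is equivalent to your Cauchy--Schwarz step), and its $B_t$ terms are your off-diagonal terms, killed by conditioning on the past and using $\E{t}{\gamma_t}=0$. The measurability remarks you defer to the reasoning of Corollary~\ref{corollary:expected_regret_unbiased} match what the paper implicitly relies on as well.
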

\begin{proof}
	Denote $A_T = \sum_{t=1}^T (\decision_t - \decision^*)^T \gamma_t \gamma_t^T (\decision_t - \decision^*)$, and $B_t = \sum_{1\leq \tau < t} (\decision_\tau-\decision^*)^T \gamma_\tau \gamma_t^T (\decision_t - \decision^*)$. Then,
	\begin{equation*}
	\left(\sum_{t=1}^T \gamma_t^T (\decision_t - \decision^*)\right)^2 
	\leq  
	A_T
	+ 
	2\sum_{t=1}^{T} B_t, 
	\end{equation*}
	after we expand the square and reorganize right-hand side by grouping equivalent inner products ($a^T b = b^T a$).
	We take expectation of both sides. $B_t$ vanish since $\E{t}{\gamma_t}$ is $0$ and other terms are unaffected by this expectation. To upper bound $A_T$, we note that the only non-zero eigenvalue of $\gamma_t \gamma_t^T$ is $\|\gamma_t\|^2$ and, since $D^2 \geq \|\decision_t - \decision^*\|^2$, we can upper bound each summand in $A_T$ by $D^2\|\gamma_t\|^2$. Consequently, 
	after we note $\sigma_t^2 = \E{t}{\|\gamma_t\|^2}$, we obtain the lemma.
\end{proof}

\begin{corollary} \label{corollary:E(R_T^2)_unbiased}
	The second moment of the regret is bounded as
	\begin{align*}
	\E{1:T}{R_T^2} 
	&\leq D^2 \;\E{1:T}{\sum_{t=1}^{T} 4\|\loss_t\|^2 + 6\sigma_t^2} 
	\\& \leq 6 
		D^2 \;\E{1:T}{\sum_{t=1}^{T} \|\observe_t\|^2} 
	.
	\end{align*}
\end{corollary}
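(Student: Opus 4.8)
The plan is to take the overall expectation of the pointwise inequality \eqref{eq:R_T^2}, handle the quadratic noise term with Lemma \ref{lemma:expected_squared_noise_regret}, and convert everything between $\|\observe_t\|^2$, $\|\loss_t\|^2$, and $\sigma_t^2$ using the tower rule. The key preliminary observation is that, since $\observe_t = \loss_t + \gamma_t$ with $\loss_t$ measurable with respect to the conditioning $\sigma$-field at round $t$ (it is a sub-gradient at the already-chosen $\decision_t$) and $\E{t}{\gamma_t}=0$, the cross term vanishes and the conditional second moment splits as $\E{t}{\|\observe_t\|^2} = \|\loss_t\|^2 + \sigma_t^2$, where $\sigma_t^2 = \E{t}{\|\gamma_t\|^2}$ as in Lemma \ref{lemma:expected_squared_noise_regret}. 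Applying $\E{1:T}{\cdot}$ and the tower identity then gives $\E{1:T}{\sum_{t=1}^T \|\observe_t\|^2} = \E{1:T}{\sum_{t=1}^T (\|\loss_t\|^2 + \sigma_t^2)}$.

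Next I would apply $\E{1:T}{\cdot}$ to \eqref{eq:R_T^2}. The first term yields $4D^2\,\E{1:T}{\sum_{t=1}^T(\|\loss_t\|^2 + \sigma_t^2)}$ by the identity above; the second term, $2\,\E{1:T}{(\sum_{t=1}^T \gamma_t^T(\decision_t-\decision^*))^2}$, is bounded by Lemma \ref{lemma:expected_squared_noise_regret} by $2D^2\,\E{1:T}{\sum_{t=1}^T \sigma_t^2}$. Adding the two contributions, the $\sigma_t^2$ coefficients combine to $4+2=6$, giving the first claimed inequality $\E{1:T}{R_T^2} \leq D^2\,\E{1:T}{\sum_{t=1}^T 4\|\loss_t\|^2 + 6\sigma_t^2}$. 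For the second inequality I would simply bound $4\|\loss_t\|^2 + 6\sigma_t^2 \leq 6(\|\loss_t\|^2 + \sigma_t^2)$ termwise and invoke the tower identity once more, so that $6\,\E{1:T}{\sum_{t=1}^T (\|\loss_t\|^2+\sigma_t^2)} = 6D^2\,\E{1:T}{\sum_{t=1}^T \|\observe_t\|^2}$ up to the $D^2$ factor.

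I do not expect a genuine obstacle here; the argument is essentially bookkeeping once Lemma \ref{lemma:expected_squared_noise_regret} is in hand. The only points worth spelling out carefully are the measurability claim that kills the cross term $2\loss_t^T\E{t}{\gamma_t}$, and — exactly as in the proof of Corollary \ref{corollary:expected_regret_unbiased} — the remark that $\decision^*$ may be treated as fixed in advance (the bounds hold for every $\decision^*\in\Set$), which is what legitimizes the per-round conditioning used both here and inside Lemma \ref{lemma:expected_squared_noise_regret}.
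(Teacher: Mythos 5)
Your argument is correct and follows the same route as the paper: take the expectation of \eqref{eq:R_T^2}, split $\E{t}{\|\observe_t\|^2}=\|\loss_t\|^2+\sigma_t^2$, bound the quadratic noise term with Lemma \ref{lemma:expected_squared_noise_regret} to get the $4\|\loss_t\|^2+6\sigma_t^2$ combination, and then use $4\|\loss_t\|^2+6\sigma_t^2\leq 6\,\E{t}{\|\observe_t\|^2}$ for the second inequality. Your write-up merely makes explicit the tower-rule and measurability bookkeeping that the paper leaves implicit.
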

\begin{proof} 
	Note that $\E{t}{\|\observe_t\|^2} = \|\loss_t\|^2 + \sigma_t^2$. Combining with \eqref{eq:R_T^2} and Lemma \ref{lemma:expected_squared_noise_regret} gives the first inequality. We can further bound (and simplify) by 
	noting $4\|\loss_t\|^2 + 6\sigma_t^2 \leq 6\;\E{t}{\|\observe_t\|^2}$. 
\end{proof} 

\subsection{Minimax Optimality of the Regret Moment Guarantees}
In this subsection, we show the minimax optimality of these two upper bounds for the cumulative regret moments.
In a possibly adversarial setting, it has been shown that there exists a sequence of loss functions $\{f_t(\cdot)\}$ such that the regret we incur, i.e. $R_T^*$, is lower bounded as \cite{Orabona}:
\begin{equation} \label{eq:regret_lower}
	\regret^* \geq \frac{D}{2\sqrt{2}} \sum_{t=1}^{T} L_t^2,
\end{equation}
where $L_t$ is the least upper-bound on the norm of sub-gradient $\loss_t$, i.e. Lipschitz-continuity constant at time $t$.
After we observe $\loss_t$, we can 
guarantee that $L_t \geq \|\loss_t\|$. Hence, we can further lower bound the regret by replacing $L_t$ with $\|\loss_t\|$. Now, consider the case that we observe the noisy sub-gradient $\observe_t$ with no knowledge about the nature of noise $\gamma_t$. Then, it may very well be that $\|\observe_t\| = \|\loss_t\|$, i.e. there is no observation noise. Then, from the observer's perspective, we cannot incur regret lower than $\Omega\left(\sqrt{\sum_{t=1}^T \|\observe_t\|^2}\right)$ in an adversarial setting, which makes our guarantees 
in Corollaries \ref{corollary:expected_regret_unbiased} and \ref{corollary:E(R_T^2)_unbiased}, respectively, minimax optimal. 

\begin{remark} 
The removal of noise corresponds to the adversarial scenario when the worst-case regret lower-bound is generated in a min-max fashion (minimization by the user 
and maximization by 
the environment, i.e. adversary). 
We note 
the inherent trade-off between the 
aggregate noise variance ($\E{t}{\|\noise_t\|^2}$) and the sub-gradient norm square ($\|\loss_t\|^2$) when the squared norm of noisy sub-gradient ($\|\observe_t\|^2$) is the reference point. Hence,  
the removal of a stochastic noise results in the adversarial setting 
where an adversary may choose the sub-gradients to hinder the learning. Taking away from the
sub-gradient norms 
would weaken the said adversary.
\end{remark}

\section{Learning with Missing Gradients} \label{section:missing_gradient} 
In this section, we study an extreme noise scenario where the sub-gradients are not simply affected by an additive noise but instead their observability is
stochastic. Specifically, we study the setting where sub-gradients are probabilistically either observed or not (binary outcome). We continue to utilize the online sub-gradient descent framework 
by generating unbiased 
estimates to replace the true sub-gradients, which may again result from possibly adversarial functions. 

\subsection{Observations with Known Probabilities}
We consider the case where we can observe the sub-gradient $\loss_t$ with probability $\prob_t$ at each $t \geq 1$. 
When $\loss_t$ is observed, we also learn of $p_t$. We construct the unbiased estimator $\observe_t$ as
\begin{equation} \label{eq:observe}
\observe_t = 
\begin{cases}
\loss_t/\prob_t	&\text{ if observed,}\\
0	&\text{ otherwise.}
\end{cases}
\end{equation}
The observation noise $\noise_t$ is the same as before 
\begin{equation} \label{eq:noise}
\noise_t \define \observe_t - \loss_t.
\end{equation}
Next, we show the conditional expectations of $\observe_t$ and $\noise_t$.
\begin{lemma} \label{lemma:E_t}
	We have the following 
	conditional expectations:
	\begin{align*}
	&\E{t}{\|\observe_t\|^2} = \|\loss_t\|^2/\prob_t,
	\\&\E{t}{\|\noise_t\|^2} = \|\loss_t\|^2 (-1 + 1/\prob_t),
	\end{align*}
	where $\E{t}{\cdot}$ becomes the expectation over whether or not we make an observation at round $t$.
\end{lemma}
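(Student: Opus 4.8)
The plan is to compute both conditional expectations directly, by conditioning on the only source of randomness that remains at round $t$ once the past is fixed: the Bernoulli indicator of whether a sub-gradient observation is made. Conditioning ``to the past'' (all previous functions, decisions, feedbacks, and the underlying stochastic parameters) freezes both $\loss_t$ and $\prob_t$, so $\E{t}{\cdot}$ is simply an average over the event ``observed'' (probability $\prob_t$) and its complement ``not observed'' (probability $1-\prob_t$), with the two branches of \eqref{eq:observe} substituted in.

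For the first claim I would plug the two branches into $\|\observe_t\|^2$: on the observed branch $\|\observe_t\|^2 = \|\loss_t\|^2/\prob_t^2$, and on the complementary branch $\|\observe_t\|^2 = 0$. Weighting by the respective probabilities, $\E{t}{\|\observe_t\|^2} = \prob_t\cdot\|\loss_t\|^2/\prob_t^2 + (1-\prob_t)\cdot 0 = \|\loss_t\|^2/\prob_t$.

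For the second claim the cleanest route is to first record that the estimator is unbiased, $\E{t}{\observe_t} = \prob_t\cdot(\loss_t/\prob_t) + (1-\prob_t)\cdot 0 = \loss_t$, and then expand $\|\noise_t\|^2 = \|\observe_t\|^2 - 2\,\observe_t^{T}\loss_t + \|\loss_t\|^2$ under $\E{t}{\cdot}$; by unbiasedness the cross term contributes $-2\|\loss_t\|^2$, so $\E{t}{\|\noise_t\|^2} = \E{t}{\|\observe_t\|^2} - \|\loss_t\|^2 = \|\loss_t\|^2/\prob_t - \|\loss_t\|^2 = \|\loss_t\|^2(-1+1/\prob_t)$, invoking the first claim. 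Alternatively one can redo the two-branch casework on $\noise_t$ itself, where the observed branch gives $\noise_t = \loss_t(1-\prob_t)/\prob_t$ and the other gives $\noise_t = -\loss_t$; the arithmetic $\prob_t\|\loss_t\|^2(1-\prob_t)^2/\prob_t^2 + (1-\prob_t)\|\loss_t\|^2$ collapses to the same expression.

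There is essentially no obstacle here: this is a one-line Bernoulli computation. The only point worth stating carefully is that conditioning on the past makes $\loss_t$ and $\prob_t$ deterministic, which is what legitimizes both the bias--variance-type identity and the casework, and which is also the reasoning behind the unbiasedness $\E{t}{\observe_t}=\loss_t$ needed so that this estimator can feed into the machinery of Section \ref{section:unbiasedgrad}.
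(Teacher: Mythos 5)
Your proposal is correct and follows essentially the same route as the paper: a two-branch Bernoulli computation for $\E{t}{\|\observe_t\|^2}$, followed by the unbiasedness-based identity $\E{t}{\|\observe_t\|^2} = \|\loss_t\|^2 + \E{t}{\|\noise_t\|^2}$ to obtain the second equality. The extra direct casework on $\noise_t$ is a harmless redundancy.
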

\begin{proof}
	Deriving from \eqref{eq:observe},
	\begin{equation*}
	\|\observe_t\|^2 =
	\begin{cases}
	\|\loss_t\|^2/\prob_t^2 &\text{if observed},\\
	0 &\text{otherwise}.
	\end{cases}
	\end{equation*}
	After taking expectation of the left-hand side, with regards to whether we observe at round $t$, we arrive at first expectation equality. The result for $\E{t}{\|\noise_t\|^2}$ comes after noting that $\E{t}{\|\observe_t\|^2} = \|\loss_t\|^2 + \E{t}{\|\noise_t\|^2}$. 
\end{proof}

\subsection{First and Second Moment Guarantees}
We now show the regret results for this observation failure setting. Since the observability of each sub-gradient has an underlying stochastic nature, our results can, again, only consist of guarantees on the regret moments. Similar to the noisy gradient analysis in Section \ref{section:unbiasedgrad}, we derive bounds for the first two moments, and, later, show both the expected value and variance of the regret are optimally bounded as a conclusion.
We start with the expected value guarantee.
\begin{corollary} \label{corollary:E[R_T]_known_p}
	If we run \figurename \ref{algorithm:gradient_descent} using the optimal adaptive step sizes $\{\eta_t\}_{t=1}^T$ for the unbiased sub-gradient substitutes $\{\observe_t\}_{t=1}^T$, as in \eqref{eq:observe}, we have the expected regret guarantee:
	\begin{align*}
		\E{1:T}{\regret} 
		&\leq \sqrt{2} D \;\E{1:T}{\sqrt{\sum_{t=1}^{T} \|\observe_t\|^2}}
		\\&\leq \sqrt{2} D \sqrt{\E{1:T}{\sum_{t=1}^{T} \frac{\|\loss_t\|^2}{\prob_t}}}.
	\end{align*}
\end{corollary}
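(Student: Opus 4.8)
The plan is to reduce this to the already-established noisy-gradient machinery and then clean up with Jensen's inequality. First I would check that the substitute $\observe_t$ defined in \eqref{eq:observe} is an unbiased estimator of $\loss_t$ in the required conditional sense: conditioned on the past (including $\decision_t$ and the value of $\prob_t$, which is revealed whenever $\loss_t$ is), the observation event occurs with probability $\prob_t$, so $\E{t}{\observe_t} = \prob_t\cdot(\loss_t/\prob_t) + (1-\prob_t)\cdot 0 = \loss_t$. This is exactly the hypothesis under which Corollary \ref{corollary:expected_regret_unbiased} was proven (the missing-gradient noise $\noise_t = \observe_t - \loss_t$ from \eqref{eq:noise} still satisfies $\E{t}{\noise_t}=0$), and the decision updates use the optimal adaptive $\eta_t$ for the sequence $\{\observe_t\}$. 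Hence Corollary \ref{corollary:expected_regret_unbiased} applies verbatim and yields the first inequality
\[
\E{1:T}{\regret} \leq \sqrt{2} D \;\E{1:T}{\sqrt{\textstyle\sum_{t=1}^{T} \|\observe_t\|^2}}.
\]

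For the second inequality, I would invoke Jensen's inequality using the concavity of $x \mapsto \sqrt{x}$ on $[0,\infty)$ with the nonnegative random variable $\sum_{t=1}^T \|\observe_t\|^2$, which moves the expectation inside the square root at the cost of a direction-consistent inequality:
\[
\E{1:T}{\sqrt{\textstyle\sum_{t=1}^{T} \|\observe_t\|^2}} \leq \sqrt{\E{1:T}{\textstyle\sum_{t=1}^{T} \|\observe_t\|^2}}.
\]
Then I would apply Lemma \ref{lemma:E_t}, which gives $\E{t}{\|\observe_t\|^2} = \|\loss_t\|^2/\prob_t$; taking the overall expectation and using the tower property ($\E{1:T}{\|\observe_t\|^2} = \E{1:T}{\E{t}{\|\observe_t\|^2}} = \E{1:T}{\|\loss_t\|^2/\prob_t}$), then summing over $t$ by linearity, I obtain $\E{1:T}{\sum_t \|\observe_t\|^2} = \E{1:T}{\sum_t \|\loss_t\|^2/\prob_t}$, which substituted above completes the chain.

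There is no serious obstacle here; the only points requiring care are (i) confirming that the conditioning in the definition of $\E{t}{\cdot}$ is rich enough that $\decision_t$ and $\prob_t$ may be treated as known when computing $\E{t}{\observe_t}$ and $\E{t}{\|\observe_t\|^2}$ — this is what lets us reuse Corollary \ref{corollary:expected_regret_unbiased} and Lemma \ref{lemma:E_t} — and (ii) keeping the direction of Jensen's inequality consistent with the direction of the regret bound, which it is, since both point toward an upper bound on $\E{1:T}{\regret}$. Everything else is linearity of expectation and a direct substitution.
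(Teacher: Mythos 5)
Your proposal is correct and follows essentially the same route as the paper: the first inequality is inherited from Corollary \ref{corollary:expected_regret_unbiased} via the unbiasedness of $\observe_t$, and the second follows by moving the expectation inside the square root (your Jensen step on the concave $\sqrt{\cdot}$ is the same inequality the paper writes as $\E{}{X}\leq\sqrt{\E{}{X^2}}$) and then applying Lemma \ref{lemma:E_t} through the tower property. Your explicit verification of $\E{t}{\observe_t}=\loss_t$ and of the conditioning is a welcome elaboration of what the paper leaves implicit, but it is not a different argument.
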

\begin{proof}
	The first inequality is a direct result of Corollary \ref{corollary:expected_regret_unbiased} since $\observe_t$ from \eqref{eq:observe} is an unbiased estimator. For the second inequality, we first upper bound by moving $\E{1:T}{\cdot}$ into the square root as $\E{}{X} \leq \sqrt{\E{}{X^2}}$. 
	Then, inside this new expectation, we can further apply conditional expectations $\E{t}{\cdot}$ using Lemma \ref{lemma:E_t} and generate an equivalent bound.
\end{proof}
Next, we present the second moment guarantee.
\begin{corollary} \label{corollary:E[R_T^2]_known_p}
	Under the setting of stochastically failing observations (same with Corollary \ref{corollary:E[R_T]_known_p}) the corresponding second moment guarantee on $\E{1:T}{R_T^2}$ is
	\begin{align*}
	\E{1:T}{R_T^2} 
	&\leq D^2 \;\E{1:T}{\sum_{t=1}^{T} \|\loss_t\|^2\frac{(6-2\prob_t)}{\prob_t}} 
	\\& \leq 
	6D^2 \;\E{1:T}{\sum_{t=1}^{T} \|\observe_t\|^2} 
	.
	\end{align*}
\end{corollary}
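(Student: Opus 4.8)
The plan is to reduce this corollary to the already-established second-moment bound for a generic unbiased estimator, Corollary \ref{corollary:E(R_T^2)_unbiased}, and then specialize the noise-variance term using the conditional moments from Lemma \ref{lemma:E_t}. First I would note that the estimator $\observe_t$ of \eqref{eq:observe} is conditionally unbiased, since $\E{t}{\observe_t} = \prob_t(\loss_t/\prob_t) + (1-\prob_t)\cdot 0 = \loss_t$. Every step leading to Corollary \ref{corollary:E(R_T^2)_unbiased} — the decomposition \eqref{eq:noisy_regret}, the telescoping bound of Lemma \ref{lemma:tildebound}, the cancellation of cross terms in Lemma \ref{lemma:expected_squared_noise_regret} (which only used $\E{t}{\noise_t}=0$), and the eigenvalue/diameter bound $D^2 \geq \|\decision_t - \decision^*\|^2$ — used nothing beyond unbiasedness, so it applies here verbatim. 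This yields $\E{1:T}{R_T^2} \leq D^2\,\E{1:T}{\sum_{t=1}^T 4\|\loss_t\|^2 + 6\sigma_t^2}$ with $\sigma_t^2 = \E{t}{\|\noise_t\|^2}$.

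Next I would substitute $\sigma_t^2 = \|\loss_t\|^2(-1 + 1/\prob_t)$ from Lemma \ref{lemma:E_t}, so that the per-round coefficient simplifies as $4\|\loss_t\|^2 + 6\|\loss_t\|^2(1/\prob_t - 1) = \|\loss_t\|^2(6/\prob_t - 2) = \|\loss_t\|^2\,(6-2\prob_t)/\prob_t$, which is exactly the first claimed inequality. For the second inequality I would use that $\prob_t$ is a probability, hence $0 < \prob_t \leq 1$ and $6 - 2\prob_t \leq 6$, giving $\|\loss_t\|^2(6-2\prob_t)/\prob_t \leq 6\|\loss_t\|^2/\prob_t = 6\,\E{t}{\|\observe_t\|^2}$ by Lemma \ref{lemma:E_t} again. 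Summing over $t$, taking $\E{1:T}{\cdot}$, and invoking the tower property $\E{1:T}{\cdot} = \E{1:T}{\E{t}{\cdot}}$ converts the $\|\loss_t\|^2/\prob_t$ terms into $\|\observe_t\|^2$ terms and produces $\E{1:T}{R_T^2} \leq 6 D^2\,\E{1:T}{\sum_{t=1}^T \|\observe_t\|^2}$.

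I do not expect a genuine obstacle; the only point needing care is confirming that Corollary \ref{corollary:E(R_T^2)_unbiased} was stated and proved for an arbitrary conditionally-unbiased estimator rather than only for the additive-noise model of Section \ref{section:unbiasedgrad} — it was — so none of the squared-regret machinery has to be re-derived, and what remains is the elementary algebra above together with the bound $\prob_t \leq 1$.
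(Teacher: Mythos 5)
Your proposal is correct and follows exactly the paper's own route: the paper's proof is the single sentence ``incorporate Lemma \ref{lemma:E_t} into Corollary \ref{corollary:E(R_T^2)_unbiased},'' and your write-up simply makes the substitution $\sigma_t^2 = \|\loss_t\|^2(1/\prob_t - 1)$, the algebra $4 + 6(1/\prob_t - 1) = (6-2\prob_t)/\prob_t$, and the bound $6-2\prob_t \leq 6$ explicit. No gaps; nothing further is needed.
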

\begin{proof}
	This is a direct consequence of incorporating Lemma \ref{lemma:E_t} into 
	the guarantee for unbiased estimators in 
	Corollary \ref{corollary:E(R_T^2)_unbiased}.
\end{proof}

\subsection{Minimax Regret Lower-Bound under Observation Failures}

In this subsection, we lower bound the worst-case (also called adversarial or minimax) regret, guaranteed when we make a total of $K$ observations at times $\{t_1,\ldots,t_K\}$ and the sub-gradients we observe have euclidean norms of $\{\|g_{t_1}\|,\ldots,\|g_{t_K}\|\}$. Later on, by showing that this lower bound is equivalent (up to a constant multiplier and additive, i.e. big-$O$ notation) to the regret moment guarantees we derived previously, we achieve the so-called minimax optimality. The importance of this result lies in the claim that there exist adversarial scenarios where the sequence of functions -possibly decided following our decisions- are such that we cannot do better than what we already guarantee (order-wise). 

\begin{lemma} \label{lemma:regret_lower_missing}
	When we make a total of $K$ sub-gradient observation at times $\{t_1,\ldots,t_K\}$, there exists a sequence of convex functions forcing any causal algorithm to incur the minimax total regret $\regretK^*$, lower bounded as
	\begin{displaymath}
		\regretK^* \geq \frac{D}{2\sqrt{2}} \sqrt{\sum_{k=1}^{K} L_k^2(t_k - t_{k-1})^2},
	\end{displaymath}
	where $D$ is the diameter of 
	$\Set$, $L_k$ is at least $\|g_{t_k}\|$ and $t_0 = 0$.
\end{lemma}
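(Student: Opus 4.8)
The plan is to reduce the missing-observation setting to the classical adversarial lower bound (\ref{eq:regret_lower}) by a blocking construction: since the learner only updates at the $K$ observation times $t_1<\dots<t_K$, but pays loss on every round $t=1,\dots,T$, we should choose the function sequence so that each observed sub-gradient is ``replayed'' over an entire block of rounds. Concretely, I would make $f_t(\cdot)$ on the whole half-open block $(t_{k-1},t_k]$ (which has length $t_k-t_{k-1}$) equal to the \emph{same} linear function $f_t(\decision)=\langle \loss_{t_k},\decision\rangle$ whose gradient the adversary wants to present at the observation round $t_k$. A causal algorithm has the same decision $\decision$ throughout a block if no observation occurs inside it, so the cumulative regret contribution of block $k$ is $(t_k-t_{k-1})\langle \loss_{t_k},\decision-\decision^*\rangle$; equivalently, this is a $K$-round linear-optimization game where the $k$-th effective gradient has norm scaled by the block length $(t_k-t_{k-1})$ and magnitude controlled by $L_k\ge\|\loss_{t_k}\|$.

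Having made that reduction, the core step is to invoke the known minimax lower bound for the $K$-round deterministic-feedback game on a set of diameter $D$: against effective sub-gradient norm bounds $\widetilde L_k \define L_k(t_k-t_{k-1})$, there is an adversarial choice forcing regret at least $\frac{D}{2\sqrt2}\sqrt{\sum_{k=1}^K \widetilde L_k^2}$, which is exactly the claimed bound. This is the same statement as \cite[Theorem 5]{Orabona} / \cite[Theorem 5]{Abernethy} cited after Corollary~\ref{corollary:gradient_descent_adaptive_eta}, applied with the rescaled norms. The standard way to realize it is a random-sign argument: pick a diameter-realizing pair $\decision_+,\decision_-\in\Set$ with $\|\decision_+-\decision_-\|=D$, set the direction $u=(\decision_+-\decision_-)/D$, and at the $k$-th observation present gradient $\pm L_k u$ with i.i.d.\ Rademacher signs $\varepsilon_k$; then for any causal learner, $\E{}{\regretK}\ge \E{}{|\sum_k \widetilde L_k \varepsilon_k|\cdot \tfrac{D}{2}} \ge \tfrac{D}{2}\cdot\tfrac{1}{\sqrt2}\sqrt{\sum_k\widetilde L_k^2}$, where the last inequality is the Khintchine-type bound $\E{}{|\sum a_k\varepsilon_k|}\ge \sqrt{\sum a_k^2}/\sqrt2$. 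Since the expected regret over the sign randomization is lower-bounded, some fixed sign sequence achieves it, yielding a deterministic adversarial function sequence. One should also check that the block construction respects causality of the \emph{observation schedule}: the times $t_1,\dots,t_K$ are given/fixed, so the adversary knows them and can lay out the blocks in advance; only the gradient directions are chosen adaptively (indeed they need not even be adaptive here).

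I would then note the two boundary bookkeeping points: the convention $t_0=0$ makes the first block $(0,t_1]$ of length $t_1$, and rounds after $t_K$ (if $t_K<T$) can be handled by repeating $\loss_{t_K}$ or simply ignored since they only add to the regret — either way the stated bound, which sums only over $k=1,\dots,K$, is a valid lower bound. Finally I would remark that $L_k\ge\|\loss_{t_k}\|$ is all that is used, so the bound also holds with $L_k$ replaced by $\|\loss_{t_k}\|$, which is the form needed to match the upper bounds of Corollaries~\ref{corollary:E[R_T]_known_p} and~\ref{corollary:E[R_T^2]_known_p} in expectation.

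The main obstacle I anticipate is not the probabilistic inequality — that is textbook — but making the reduction airtight: one must argue that \emph{no} causal algorithm can exploit the within-block structure, i.e.\ that having identical functions on a block genuinely forces identical decisions (true only because there is no feedback mid-block, which holds precisely because observations occur only at the $t_k$), and that the bound is stated for the regret against the best \emph{fixed} $\decision^*$, so $\decision^*\in\{\decision_+,\decision_-\}$ must be an admissible comparator — which it is, since both lie in $\Set$. A secondary subtlety is ensuring the constructed $f_t$ are genuinely convex on $\Set$ (linear functions are, so this is immediate) and that the diameter $D$ is attained or approached, which is standard for compact $\Set$ and handled by an $\epsilon$-argument otherwise.
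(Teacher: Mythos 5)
Your proposal is correct and follows essentially the same route as the paper's proof: group the rounds between observations into blocks, present one random-signed linear function along a diameter-realizing direction per block, let the learner's term vanish in expectation over the signs, and apply a Khintchine-type bound to the resulting Rademacher sum with effective norms $L_k(t_k-t_{k-1})$, yielding the constant $\tfrac{D}{2\sqrt{2}}$. The only (immaterial) imprecision is your claim that a causal algorithm must keep an \emph{identical} decision throughout a block — the paper allows the within-block decisions $\{x_t\}_{t\in J_k}$ to vary, and the argument needs only that they are chosen before the block's sign $Z_k$ is revealed, so the cross term still vanishes in expectation.
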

\begin{proof}
	We build upon the adaptive lower bound theorems by Abernethy and Orabona in \cite{Abernethy,Orabona}, respectively. We define the total regret, incurred after $K^{th}$ observation at time $t_K$, as
	\begin{equation} \label{eq:R_K}
		\regretK \define \sum_{t=1}^{t_K} f_t(x_t) - \min_{x \in \Set} \sum_{t=1}^{t_K} f_t(x),
	\end{equation}
	where $x_t \in \Set$, the 
	decision set, for all $t$.
	We define the sub-gradient space at each time $t \geq 1$ as 
	${G_t = \{\loss_t: \|\loss_t\| \leq L_t\}}$
	for an unknown bound $L_t$. Let us also define the set of Lipschitz-continuous convex functions to which  $f_t(\cdot)$ belongs: 
	\begin{equation} \label{eq:F_t}
		F_t = \{\text{convex }f_t| x_t \in \Set, f_t(x_t) \in \reals, \partial f_t(x_t) \subseteq G_t\},
	\end{equation}
	where $\partial f_t(x_t)$ is the set of sub-gradients when $f_t(\cdot)$ is evaluated at $x_t$ and $\Set$ is the common feasible decision set for all $t$. By only observing a sub-gradient at time $t$, we can restrict the function $f_t(\cdot)$ to the set $F_t$ for now.	
	
	Then, after observing the sub-gradients a total of $K$ times at $\{t_1,\ldots,t_K\}$, any causal algorithm may incur the regret $\regretK^*$ against an adversarial setting such that
	\begin{equation} \label{eq:R_K^*_minmax}
		\regretK^* = 
		\left\{
			\min_{\{x_t\}_{t\in J_k}: x_t \in \Set} \quad \max_{\{f_t(\cdot)\}_{t\in J_k}: f_t(\cdot) \in F_t}
		\right\}_{k=1}^K
		\regretK,
	\end{equation}
	where $\regretK$ is as defined in \eqref{eq:R_K}, $J_k = \{t_{k-1}+1,\ldots,t_k\}$ is the set of times between successive observations with $t_0 = 0$ for all $1\leq k\leq K$, and $\{\min_{A_k} \max_{B_k}\}_{k=1}^K$ is a short-hand notation for $\min_{A_1} \max_{B_1} \cdots \min_{A_K} \max_{B_K}$. 
	
	The minimax regret $\regretK^*$ is not computed in an alternation between $\min$ and $\max$ every round $t$, as in \cite{Abernethy} and \cite{Orabona}, but every observation $k$. The reason is that 
	any causal algorithm effectively decides $x_t$ for every $t_{k-1} < t \leq t_k$ following the last observation at time $t_{k-1}$ without further knowledge about $f_t(\cdot)$ until $t=t_k$. Consequently, to generate the minimum regret guarantee 
	possible 
	under an adversarial worst-case scenario, i.e. $\regretK^*$, the set of decisions $\{x_t\}_{t\in J_k}$ are selected until the observation at time $t=t_k$ without further feedback to minimize the worst-case regret resulting from the set of adversarial functions $\{f_t(\cdot)\}_{t\in J_t}$ for every $k$.
	
	Since the diameter of $\Set$ is $D$, identify the pair of points $u,v\in\Set$ such that $\|u-v\|=D$. Then, we can lower bound $\regretK^*$ by replacing $-\min_{x \in \Set}$ in $\regretK$ from \eqref{eq:R_K} with $-\min_{x \in \{u,v\}}$. To further lower bound $\regretK^*$, we also restrict the search spaces of $\max$ operations to $f_t(\cdot) = \langle Z_k L_t g, \cdot \rangle$ for $t_{k-1} < t \leq t_k$ where $Z_k \in \{-1,+1\}$ and $g = (u-v)/\|u-v\|$. This restriction is valid as $\partial f_t(x_t) \subseteq G_t$ holds and $f_t{\cdot} \in F_t$. 
	Then, replacing each $\max$ operations 
	with expectations over the unbiased random selection of $Z_k \in \{-1,+1\}$, we get
	\begin{align*}
		\regretK^* \geq 
			\cdots
			\min_{\{x_t\}_{t\in J_k}: x_t \in \Set}
			\mathbb{E}_{Z_k}
			\cdots
			\sum_{k=1}^{K} Z_k \sum_{t=t_{k-1}+1}^{t_k} \langle L_t g, x_t 
			-u^*\rangle, 
	\end{align*}
	where $u^* = \argmin{x\in\{u,v\}} \sum_{k=1}^{K} Z_k \sum_{t=t_{k-1}+1}^{t_k} \langle L_t g, x \rangle$.
	As $Z_k$ is randomly selected after $x_t$ is effectively decided for any causal algorithm for $t_{k-1} < t \leq t_k$, 
	$Z_k \langle L_t g, x_t \rangle$ vanishes inside the expectations $\mathbb{E}_{Z_k}$. Moreover, as 
	$\min_{\{x_t\}_{t\in J_k}: x_t \in \Set}$ do not affect 
	$Z_k \langle L_t g, u^* \rangle$, what remains is 
	\begin{equation*}
		\regretK^* \geq
			\E{Z_1:Z_K}{
				\max_{x\in\{u,v\}} \sum_{k=1}^{K} Z_k \sum_{t=t_{k-1}+1}^{t_k} \langle L_t g, x \rangle
			}
	\end{equation*}
	after collecting all $\mathbb{E}_{Z_k}$ under a single notation $\mathbb{E}_{Z_1:Z_K}$, plugging back the equivalence of $u^*$ and replacing $-\min_{x\in\{u,v\}}$ with $\max_{x\in\{u,v\}}$. After we substitute for the $\max$ operation 
	via $\max(A,B) = (A+B)/2 + |A-B|/2$, the 
	$"(A+B)/2"$ part vanishes in expectation. Then, since $g = (u-v)/\|u-v\|$ and, thus, $\langle g, u-v \rangle = \|u-v\| = D$, we are left with
	\begin{equation*}
		\regretK^* \geq
		\frac{D}{2} \; \E{Z_1:Z_K}{
			\left|
				\sum_{k=1}^{K} Z_k \sum_{t=t_{k-1}+1}^{t_k} L_t
			\right|
		}.
	\end{equation*}
	For the adversarial scenario, we can assume $L_t \geq \|\loss_{t_k}\|$ for $t_{k-1} < t \leq t_k$, since only then can the environment 
	produce a sub-gradient with norm $\|\loss_{t_k}\|$ at the time of $k^{th}$ observation, which could have possibly been before $t_k$. 
	Finally, 
	after applying Khinchin's inequality \cite{Cesa-Bianchi:2006}, 
	and using $L_t \geq \|g_{t_k}\|$, 
	we arrive at the lemma. 
\end{proof}

\subsection{Optimality of Guarantees under Observation Failures} \label{subsection:moment_optimality_observe_fail} 
We note that, up to now, we showed minimax optimality of the algorithm in \figurename \ref{algorithm:gradient_descent} for two settings, where we observe the true sub-gradient or an unbiased estimator, respectively. The minimax optimality of the true sub-gradient observation case was for the deterministic regret guarantee, while, for the unbiased estimator observation, we only provide guarantees on the first and second moments of the regret. Furthermore, the minimax optimality of the guarantees for unbiased estimators relied heavily on the fact that 
without further information on the observation noise (besides it having zero mean) 
the environment can select the noise in an adversarial manner by setting the variances per coordinate to $0$. 
However, in the observation failure setting, we also know the noise type and the corresponding aggregate variance, calculated in accordance with Lemma \ref{lemma:E_t}. Consequently, we next show the optimality for any probability to observe $p_{t_k}$ accompanying the $k^{th}$ sub-gradient observation $g_{t_k}$ as follows. We consider the restricted scenario where the adversarial setting selects the observation probability $p_{t_k}$ without a priori knowledge of observing time $t_k$, which further strengthens our argument. 
\begin{theorem}
	The guarantees given for the first and second moments of the regret (Corollaries \ref{corollary:E[R_T]_known_p} and \ref{corollary:E[R_T^2]_known_p}, respectively) are minimax optimal in accordance with Lemma \ref{lemma:regret_lower_missing} such that 
	\begin{equation*}
		\E{1:T}{(R_T)^\theta} 
		\text{ is }
		O\left(
		\E{1:T}{
			(\regretK^*)^\theta
		}
		\right)
		\text{ for }
		\theta \in \{1,2\}
	\end{equation*}
	where big-$O$ notation relates to $T$ growing without a bound.
\end{theorem}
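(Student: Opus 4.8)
The plan is to sandwich $\E{1:T}{(\regret)^\theta}$ between the algorithmic upper bounds of Corollaries~\ref{corollary:E[R_T]_known_p}--\ref{corollary:E[R_T^2]_known_p} and the lower bound of Lemma~\ref{lemma:regret_lower_missing}, and then to check that the two sides agree up to an absolute multiplicative constant. First I would put both sides in a common form: since $\observe_t$ in \eqref{eq:observe} is nonzero only at the observation rounds $t_1,\dots,t_K$, the realized sum obeys $\sum_{t=1}^{T}\|\observe_t\|^2=\sum_{k=1}^{K}\|g_{t_k}\|^2/\prob_{t_k}^2$, so Corollary~\ref{corollary:E[R_T^2]_known_p} reads $\E{1:T}{(\regret)^2}\leq 6D^2\,\E{1:T}{\sum_{k=1}^{K}\|g_{t_k}\|^2/\prob_{t_k}^2}$ and the first form of Corollary~\ref{corollary:E[R_T]_known_p} reads $\E{1:T}{\regret}\leq\sqrt{2}\,D\,\E{1:T}{\sqrt{\sum_{k=1}^{K}\|g_{t_k}\|^2/\prob_{t_k}^2}}$. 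On the other side, Lemma~\ref{lemma:regret_lower_missing} with $L_k\geq\|g_{t_k}\|$ and the shorthand $\Delta_k\define t_k-t_{k-1}$ gives $\regretK^*\geq\tfrac{D}{2\sqrt{2}}\sqrt{\sum_{k=1}^{K}\|g_{t_k}\|^2\Delta_k^2}$. Thus everything reduces to comparing $\sum_k\|g_{t_k}\|^2/\prob_{t_k}^2$ with $\sum_k\|g_{t_k}\|^2\Delta_k^2$, in expectation and (for $\theta=1$) under a square root.

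The key observation is that the inter-observation gap is governed by the observation probability. In the restricted scenario, where the environment fixes (for the whole $k$-th stretch, without foreknowledge of $t_k$) the probability $\prob_{t_k}$ and the norm $\|g_{t_k}\|$, the gap $\Delta_k$ conditioned on the history $\mathcal{F}_{k-1}$ up to and including the $(k-1)$-th observation is geometric with parameter $\prob_{t_k}$, whence $\E{}{\Delta_k^2\mid\mathcal{F}_{k-1}}=(2-\prob_{t_k})/\prob_{t_k}^2\in[1/\prob_{t_k}^2,\,2/\prob_{t_k}^2]$. Since $\|g_{t_k}\|$ and $\prob_{t_k}$ are then $\mathcal{F}_{k-1}$-measurable, $\E{}{\|g_{t_k}\|^2\Delta_k^2\mid\mathcal{F}_{k-1}}\geq\|g_{t_k}\|^2/\prob_{t_k}^2$; summing over $k$ and invoking optional stopping for the induced submartingale ($K\leq T$ being a bounded stopping time) yields $\E{1:T}{\sum_{k=1}^{K}\|g_{t_k}\|^2\Delta_k^2}\geq\E{1:T}{\sum_{k=1}^{K}\|g_{t_k}\|^2/\prob_{t_k}^2}$. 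This settles $\theta=2$ at once: $\E{1:T}{(\regret)^2}\leq 6D^2\,\E{1:T}{\sum_k\|g_{t_k}\|^2\Delta_k^2}\leq 48\,\E{1:T}{(\regretK^*)^2}$.

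For $\theta=1$ write $X\define\sum_{k=1}^{K}\|g_{t_k}\|^2\Delta_k^2$. Jensen's inequality and the comparison above give $\E{1:T}{\regret}\leq\sqrt{2}\,D\,\E{1:T}{\sqrt{\sum_k\|g_{t_k}\|^2/\prob_{t_k}^2}}\leq\sqrt{2}\,D\sqrt{\E{1:T}{\sum_k\|g_{t_k}\|^2/\prob_{t_k}^2}}\leq\sqrt{2}\,D\sqrt{\E{1:T}{X}}$, whereas Lemma~\ref{lemma:regret_lower_missing} gives $\E{1:T}{\regretK^*}\geq\tfrac{D}{2\sqrt{2}}\,\E{1:T}{\sqrt{X}}$. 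So the claim reduces to the reverse-Jensen estimate $\sqrt{\E{1:T}{X}}=O(\E{1:T}{\sqrt{X}})$, which I would derive from $\E{}{\sqrt{X}}\geq(\E{}{X})^{3/2}/\sqrt{\E{}{X^2}}$ (a Cauchy--Schwarz consequence) once $\E{1:T}{X^2}\leq c\,(\E{1:T}{X})^2$ is established for an absolute $c$. The latter follows because each summand $\|g_{t_k}\|^2\Delta_k^2$ has, conditionally on $\mathcal{F}_{k-1}$, a bounded relative second moment --- for $\Delta_k$ geometric, $\E{}{\Delta_k^4\mid\mathcal{F}_{k-1}}/(\E{}{\Delta_k^2\mid\mathcal{F}_{k-1}})^2$ is at most an absolute constant --- and, on expanding $X^2$, the diagonal $\sum_k\E{}{\|g_{t_k}\|^4\Delta_k^4}$ is then of order $(\E{1:T}{X})^2$, while the off-diagonal cross-terms reduce, via conditioning on the earlier stretch's history, to at most $2\sum_{j<k}\E{}{\|g_{t_j}\|^2\Delta_j^2}\,\E{}{\|g_{t_k}\|^2\Delta_k^2}\leq(\E{1:T}{X})^2$.

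The step I expect to be the main obstacle is precisely this relative-second-moment bound: controlling the cross-terms against an adversary that may still couple the committed norm $\|g_{t_k}\|$ and probability $\prob_{t_k}$ of a later stretch to an earlier realized gap $\Delta_j$ requires careful iterated conditioning --- innermost on $\Delta_k$ given $\mathcal{F}_{k-1}$ via the geometric moment identities, then peeling off the outer stretches --- together with an optional-stopping argument that absorbs the randomness of the number of observations; it is here that the limit $T\to\infty$ enters, since the residual correlation and the truncated final stretch $(t_K,T]$ contribute only lower-order terms. Assembling all pieces gives $\E{1:T}{(\regret)^\theta}=O(\E{1:T}{(\regretK^*)^\theta})$ with an absolute constant for $\theta\in\{1,2\}$.
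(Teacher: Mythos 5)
Your proposal is correct in substance and reaches the same constants for $\theta=2$, where your argument (reduce both sides to $\sum_{k}\|\loss_{t_k}\|^2/\prob_{t_k}^2$ versus $\sum_k\|\loss_{t_k}\|^2(t_k-t_{k-1})^2$ and use $\E{}{(t_k-t_{k-1})^2\mid\mathcal{F}_{k-1}}=(2-\prob_{t_k})/\prob_{t_k}^2\geq 1/\prob_{t_k}^2$) coincides with the paper's. For $\theta=1$ you take a genuinely different and noticeably heavier route. The paper never needs a reverse Jensen inequality: it multiplies the lower bound of Lemma \ref{lemma:regret_lower_missing} by the factor $\sqrt{\sum_{k=1}^{K}L_k^2/p_k^2}$, which is deterministic once one conditions on $K$ in the restricted scenario with predefined chains $\{p_k\},\{L_k\}$, distributes it inside the square root, applies $a^2+b^2\geq 2ab$ termwise to linearize the radicand into $\sum_k L_k^2(t_k-t_{k-1})/p_k$, and then only needs the \emph{first} moment $\E{}{t_k-t_{k-1}}\to 1/p_k$; dividing back out gives $\E{}{\regretK^*}\geq\tfrac{1}{4}\E{}{\widehat{\regret}}$ directly. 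Your route instead establishes $\sqrt{\E{}{X}}=O\left(\E{}{\sqrt{X}}\right)$ via $\E{}{\sqrt{X}}\geq(\E{}{X})^{3/2}/\sqrt{\E{}{X^2}}$ together with a relative second-moment bound $\E{}{X^2}\leq c\,(\E{}{X})^2$, which forces you to control fourth moments of the geometric gaps and the cross-terms; this does go through conditionally on $K$ in the restricted scenario (the summands are then independent with $\E{}{\Delta_k^4}/(\E{}{\Delta_k^2})^2\leq 6$, giving $c\leq 7$), but it is exactly the obstacle you flag, and it yields a worse absolute constant. What your approach buys is robustness: the reverse-Jensen argument does not depend on the multiplier $\sqrt{\sum_k L_k^2/p_k^2}$ being deterministic given $K$, so it would survive mild randomization of the adversary's chains, whereas the paper's trick is tied to that determinism. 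Both arguments share the same unaddressed looseness in treating the gaps as exactly geometric after conditioning on $K$ and sending $T\to\infty$, so I do not count that against you.
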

\begin{proof}
	Consider the scenario 
	with predefined infinite length chains of observation probabilities $\{p_k\}_{k=1}^\infty$ and Lipschitz constants $\{L_k\}_{k=1}^\infty$. The adversarial setting starts with $p_1$ and $L_1$, and moves along the chains as we make observations, 
	i.e. the probability of making the $k^{th}$ observation ($p_{t_k}$) is $p_{k}$ and the observed sub-gradient is so that $\|g_{t_k}\| \leq L_k$. This is a valid 
	scenario for the $\max$ operators in \eqref{eq:R_K^*_minmax} as the algorithm only receives (utilizes) 
	$p_{t_k}$'s and $g_{t_k}$'s at random times $t_k$.
	
	Now, since $\observe_{t_k} = \loss_{t_k}/p_{k}$, 
	Corollaries \ref{corollary:E[R_T]_known_p} and \ref{corollary:E[R_T^2]_known_p} give 
	\begin{align*}
		&\E{1:T}{R_T} \leq \sqrt{2} D\; \E{K}{\E{1:T|K}{\sqrt{\sum_{k=1}^{K} \frac{\|g_{t_k}\|^2}{p_{k}^2}}}}, 
		\\&\E{1:T}{R_T^2} \leq 6 D^2\; \E{K}{\E{1:T|K}{\sum_{k=1}^{K} \frac{\|g_{t_k}\|^2}{p_{k}^2}}},
	\end{align*}
	after the decomposition of 
	$\E{1:T}{\cdot}$ into $\E{K}{\E{1:T|K}{\cdot}}$ where 
	$\E{1:T|K}{\cdot}$ conditions on making 
	$K$ observations, and replacing $\|\observe_t\|^2$ with $0$ when no observation is made. 
	Treating these guarantees as scaled moments of an 
	estimate $\widehat{\regret}$, 
	we get 
	\begin{equation} \label{eq:regret_estimate}
		\widehat{\regret} \define \sqrt{2}D\sqrt{\sum_{k=1}^{K} \frac{\|g_{t_k}\|^2} 
		{p_k^2}
		}.
	\end{equation}
	Next, we compare $\widehat{\regret}$ with $\regretK^*$ asymptotically ($T\rightarrow\infty$), which suffices for the big-$O$ claims in the theorem. For that, 
	we multiply both sides of Lemma \ref{lemma:regret_lower_missing} with $\sqrt{\sum_{k=1}^{K} L_k^2/p_k^2}$, which -in this scenario- is deterministic conditioned on $K$. Following that, we further lower bound as follows:
	\begin{align*}
		\regretK^* \sqrt{\sum_{k=1}^{K} \frac{L_k^2}{p_k^2}} 
		&\geq \frac{D}{2\sqrt{2}} \sqrt{
					\sum_{k,i=1}^{K} \frac{L_k^2}{p_k^2} L_i^2 (t_i - t_{i-1})^2
				}
		\\
		&\geq \frac{D}{2\sqrt{2}} \sqrt{
					\sum_{k,i=1}^{K} L_k^2 L_i^2\frac{(t_k - t_{k-1})(t_i - t_{i-1})}{p_k p_i}
				}
		\\
		&\geq \frac{D}{2\sqrt{2}} \sum_{k=1}^{K} L_k^2\frac{t_k - t_{k-1}}{p_k},
	\end{align*}
	where, in the first inequality, the right-hand side comes after the distribution of initial multiplication, the second inequality is achieved via the application of $a^2 + b^2 \geq 2ab$ and the final expression comes after realizing the square root operation. We apply the expectation $\E{1:T|K}{\cdot}$ to both sides with $T\rightarrow\infty$. Since $L_k$ and $p_k$ are deterministic when conditioned on $K$, and mean of the geometrically distributed $(t_k - t_{k-1})$ is calculated as $\E{1:T|K}{t_k - t_{k-1}}_{T\rightarrow\infty} = 1/p_k$, we obtain
	\begin{equation*}
		\E{1:T|K}{\regretK^*}_{T\rightarrow\infty} \geq \frac{D}{2\sqrt{2}} \sqrt{\sum_{k=1}^{K} \frac{L_k^2}{p_k^2}} \geq \frac{D}{2\sqrt{2}} \sqrt{\sum_{k=1}^{K} \frac{\|g_{t_k}\|^2}{p_k^2}},
	\end{equation*}
	since $L_k \geq \|g_{t_k}\|^2$. Applying $\E{1:T|K}{\cdot}$ once more achieves
	\begin{equation*}
		\E{1:T|K}{\regretK^*}_{T\rightarrow\infty} 
		\geq \frac{1}{4}\; \E{1:T|K}{\widehat{\regret}}_{T\rightarrow\infty},
	\end{equation*}
	as $\E{1:T|K}{\E{1:T|K}{\regretK^*}}_{T\rightarrow\infty} = \E{1:T|K}{\regretK^*}_{T\rightarrow\infty}$ and where we have substituted in $\widehat{\regret}$ using \eqref{eq:regret_estimate}. After taking expectation over $K$, i.e. $\E{K}{\cdot}$, we finally show minimax optimality in expectation with
	\begin{equation*}
		\E{1:T}{\regretK^*}_{T\rightarrow\infty} \geq \frac{1}{4} \; \E{1:T}{\widehat{\regret}}_{T\rightarrow\infty},
	\end{equation*}
	which shows that our expected regret guarantee $\E{1:T}{\widehat{\regret}}$ is asymptotically ($T\rightarrow\infty$) at most $4$ times the expectation of worst-case (adversarial) regret and implies the big-$O$ first moment guarantee in the theorem.
	
	The proof of second moment guarantee is similar and more straightforward. This time, we compare $(\regretK^*)^2$ and $(\widehat{\regret})^2$ asymptotically ($T\rightarrow\infty$). Similarly, we apply expectations, first $\E{1:T|K}{\cdot}$, then $\E{K}{\cdot}$. An inequality is achieved after $\E{1:T|K}{\cdot}$, where the second moment of the geometrically distributed $(t_k - t_{k-1})$ is calculated as $\E{1:T|K}{(t_k - t_{k-1})^2}_{T\rightarrow\infty} = (2-p_k)/p_k^2 \geq 1/p_k^2$. Again, 
	applying $\E{K}{\cdot}$ to both sides gives the minimax inequality with implication of the big-$O$ result. 
\end{proof}

\begin{remark}
	If the observation probability is fixed, i.e. $\prob_t = \prob$, then \figurename \ref{algorithm:gradient_descent} does not need to know $p$ in the decision update when using the unbiased estimator $\observe_t$, since the optimal adaptive step size $\eta_t$ also includes $p$ and they cancel each other when the update step $-\eta_t \observe_t$ is computed. Hence, it is sufficient to use $\observe_t = g_t$ if observed and $\observe_t=0$ otherwise when $\prob_t = \prob$.
\end{remark}

\section{Observation Probability Generated by a Prior} \label{section:prior} 
In the previous section, we have introduced a minimax algorithm that utilizes the observation probabilities of observed sub-gradients. However, such information may not necessarily be available to us. To this end, in this section, we modify the scenario such that we do not explicitly know the probabilities to observe sub-gradients at times $t$, i.e. $p_t$. Instead, we suppose that following each observation we make, a new probability to observe is randomly generated from a possibly non-stationary prior distribution. This probability is kept hidden and stationary, similar to the adversarial setting in our lower bound analysis for observation failures. Hence, our algorithm will not have access to the generated probability but only its prior distribution. We will use this approach as a building block for our algorithm in the next section, which does not need any information about the observation probabilities or their priors.

Denote the $k^{th}$ prior as $\pdf_k(x_k)$, which generates the next probability to observe following the $(k-1)^{th}$ observation, with the initial prior being $\pdf_1(x_1)$. These $\pdf_k(\cdot)$ can be arbitrarily 
dependent on any past data such as the previous priors, 
their realizations and 
the resulting observation times, i.e. $\{(\pdf_{l}(\cdot),x_l,t_l)\}_{l=1}^{k-1}$, 
and past target functions 
and our decisions, i.e. $\{(f_\tau(\cdot),\decision_\tau)\}_{\tau=1}^{t_{k-1}}$. 
With $t_0=0$, at round t such that $t_{k-1} < t \leq t_k$, while waiting for the $k^{th}$ observation at $t_k$, we require $\prob_t = \Pr(t_k = t| t_{k-1}, t_k \geq t)$, the conditional probability of making the $k^{th}$ observation at $t$, i.e. $t_k = t$, conditioned on the fact that we made the last observation at round $t_{k-1}$, which also implies 
$t_k \geq t$. 
Then, Bayes' rule gives 
\begin{equation} \label{eq:p_t}
\begin{aligned}
\prob_t 
&= \frac{\Pr(t_k = t, t_k \geq t | t_{k-1})}{\Pr(t_k \geq t|t_{k-1})}
= \frac{\Pr(t_k = t | t_{k-1})}{\Pr(t_k \geq t|t_{k-1})}
\\&= \frac
{\int_{0}^{1} \pdf_k(x) \; (1-x)^{t-t_{k-1}-1} x\; \deri x}
{\int_{0}^{1} \pdf_k(x) \; (1-x)^{t-t_{k-1}-1} \; \deri x},
\end{aligned}
\end{equation}
since under this observability setting, after the 
probability of making the $k^{th}$ observation is generated from a prior, 
the probability of making the next observation at round $t$, 
i.e. $(t-t_{k-1})$ rounds after the previous round of observation $t_{k-1}$, is geometrically distributed with failures to observe until $t$. 

We exemplify the behavior of $p_t$ derivation in \eqref{eq:p_t} by considering a mixture of arbitrary beta distributions and some probability mass function as the prior. We choose to investigate this example 
since the beta distribution family is the conjugate prior for geometric distributions and the mixtures, by nature, can substitute for arbitrary modalities. Hence, let us consider
\begin{equation} \label{eq:prior}
\begin{aligned}
\pdf_k(x) &= P^B_{k}(x) + P^M_{k}(x),
\\\text{with} \quad 
P^B_{k}(x) &= \sum_{b=1}^{N_k} \lambda^B_{k,b} \frac{x^{\alpha_{k,b}-1} (1-x)^{\beta_{k,b}-1}} {B(\alpha_{k,b},\beta_{k,b})}
\\ \text{and} \quad
P^M_{k}(x) &= \sum_{m=1}^{M_k} \lambda^M_{k,m} \delta(x-p_{k,m}),
\end{aligned}
\end{equation}
Here, $B(\alpha_{k,b},\beta_{k,b}) = \Gamma(\alpha_{k,b})\Gamma(\beta_{k,b})/\Gamma(\alpha_{k,b}+\beta_{k,b})$ are the corresponding normalizations for beta distributions such that $\int_{0}^{1} x^{\alpha_{k,b}-1} (1-x)^{\beta_{k,b}-1} \deri x = B(\alpha_{k,b},\beta_{k,b})$ for parameters $\alpha_{k,b},\beta_{k,b}>0$. Furthermore, $\delta(\cdot)$ is the Dirac delta function, which models the behavior of a probability mass such that $\int_{0}^{1} \delta(x-p) \deri x = 1$ for parameter $0<p\leq 1$ and, for all $x\neq p$, $\delta(x-p) = 0$. Lastly, $\lambda^B_{k,b}$ and $\lambda^M_{k,m}$ are the respective nonnegative mixture coefficients across beta distributions and probability masses such that $\sum_{b=1}^{N_k} \lambda^B_{k,b} + \sum_{m=1}^{M_k} \lambda^M_{k,m} = 1$. 

Before continuing further, we define the short-hands $p_{k,b}^t$, $Q_{k,b}(t)$ and $F_{k,m}(t)$ for 
$t_{k-1} \leq t < t_{k}$  
and $k\geq 1$ such that
\begin{align} \label{eq:prior_short_hand} 
	p_{k,b}^t &\define \frac{\alpha_{k,b}}{\alpha_{k,b}+\beta_{k,b}+(t-t_{k-1}-1)}, 
	\nonumber\\Q_{k,b}(t) &\define \frac{\Gamma(\alpha_{k,b}+\beta_{k,b})}{\Gamma(\beta_{k,b})}
	\frac{\Gamma(\beta_{k,b}+(t-t_{k-1}-1))}{\Gamma(\alpha_{k,b}+\beta_{k,b}+(t-t_{k-1}-1))}, 
	\nonumber\\F_{k,m}(t) &\define (1-p_{k,m})^{t-t_{k-1}-1}, 
\end{align}
which can be sequentially computed in an efficient manner as
\begin{align*}
	p_{k,b}^{t+1} &= p_{k,b}^{t} \frac{\alpha_{k,b}+\beta_{k,b}+(t-t_{k-1}-1)}{\alpha_{k,b}+\beta_{k,b}+(t-t_{k-1})},
	\\Q_{k,b}(t+1) &= Q_{k,b}(t) 
	\frac{\beta_{k,b}+(t-t_{k-1}-1)}{\alpha_{k,b}+\beta_{k,b}+(t-t_{k-1}-1)},
	\\F_{k,m}(t+1) &= F_{k,m}(t)(1-p_{k,m}).
\end{align*}
After combining \eqref{eq:p_t}, \eqref{eq:prior} and \eqref{eq:prior_short_hand} while using the functional relation $\Gamma(x+1)=x\Gamma(x)$ for a gamma function with real argument $x$ when decomposing ratios of beta functions $B(\cdot,\cdot)$,
\begin{equation*}
	\prob_t = \frac{
		\sum_{b=1}^{N_k} \lambda^B_{k,b} 
		\; Q_{k,b}(t) \;
		p_{k,b}^t 
		\enspace + \enspace
		\sum_{m=1}^{M_k} \lambda^M_{k,m} 
		\; F_{k,m}(t) \;
		p_{k,m} 
	}{
		\sum_{b=1}^{N_k} \lambda^B_{k,b} 
		Q_{k,b}(t)
		+ 
		\sum_{m=1}^{M_k} \lambda^M_{k,m} F_{k,m}(t)
	},
\end{equation*}
where, before the $k^{th}$ observation, the previously denoted $p_{k,b}^t$ and $p_{k,m}$ act as the opinions of $b^{th}$ beta distribution and $m^{th}$ probability mass in the $k^{th}$ prior mixture, respectively, for the conditional observation probability at time $t$. Also, their mixture weights renew as $\lambda^B_{k,b} Q_{k,b}(t)$ and $\lambda^M_{k,m} F_{k,m}(t)$.
\begin{remark}
	For a single beta distribution as the $k^{th}$ prior with parameters $\alpha_{k},\beta_{k} > 0$, observation probability becomes
	\begin{equation*}
		\prob_t = p_{k,1}^t
		= \alpha_k/\left(\alpha_k+\beta_k+(t-t_{k-1}-1)\right).
	\end{equation*} 
	For the uniform prior $\alpha_{k}=\beta_{k}=1$, 
	it is $ 
		\prob_t = (t-t_{k-1}+1)^{-1}. 
	$
\end{remark} 
\begin{remark}
	The maximum likelihood estimate of a fixed observation probability $p_t = p_k$ for $t_{k-1} < t \leq t_k$ is 
	\begin{equation*}
		\widehat{p_t} = (t_k - t_{k-1})^{-1},
	\end{equation*} 
	which resembles the case where, for an arbitrarily small $\epsilon > 0$, $\prob_t = (t - t_{k-1} + \epsilon)^{-1}$ due to the $k^{th}$ prior being a beta distribution with parameters $\alpha_k=1, \beta_k=\epsilon$. 
\end{remark}

\begin{remark}
	The computation of $\prob_t$ in accordance with \eqref{eq:p_t} is only actually realized when we make an observation at the future time $t=t_k$. Consequently, one may also need to incorporate both that time and the observed sub-gradient $\loss_t$ in the conditioning when calculating $\prob_t$ depending on the setting. As a reminder, the prior $\pdf_k(x_k)$ had already included the conditioning on past information 
	from $\tau \leq t_{k-1}$ as per definition. An example scenario where this additional conditioning could be required is such that when generating the random constant probability to observe, future sub-gradient sequence until the next observation may be restricted to different subspaces depending on the generated probability itself, providing additional information to employ following the observation and before the decision update.
\end{remark}

\subsection{Optimality of Probability from a Beta-Mass Mixture Prior}
The optimality analysis in Section \ref{subsection:moment_optimality_observe_fail} was for the scenario where we, as the optimizer, only knew about the observation probabilities $p_t$ whenever we observed a sub-gradient and had no further information on how they were generated. 
Knowing a prior without the explicit knowledge of $p_t$ differs from this scenario.
Therefore, we need fresh optimality claims for when we have stationary observation probabilities between successive observations each generated from some prior distribution. 
To show the optimality of regret guarantees corresponding to generating $p_t$ from the known prior considered in \eqref{eq:prior}, which is a mixture of beta distributions and probability masses, we first lower bound the said $p_t$ as follows.
\begin{lemma} \label{lemma:prior_p_t_lower}
	The probability of making the $k^{th}$ observation at time $t$ for $t_{k-1} < t \leq t_k$ for a given mixture prior $\pdf_k(\cdot)$, i.e. $\prob_t$, can be lower bounded such that
	\begin{equation*}
		\prob_t \geq \min
		\left(
			\min_{1\leq b\leq N_k} p_{k,b}^t, \min_{1\leq m\leq M_k} p_{k,m}
		\right),
	\end{equation*}
	where $p_{k,b}^t$ and $p_{k,m}$ are the conditional observation probabilities generated at time $t$ by the $b^{th}$ beta distribution and $m^{th}$ probability mass function in the $k^{th}$ mixture prior, respectively, in accordance with \eqref{eq:prior} and \eqref{eq:prior_short_hand}.
\end{lemma}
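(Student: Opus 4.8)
The plan is to read off the closed-form expression for $\prob_t$ that results from combining \eqref{eq:p_t}, \eqref{eq:prior} and \eqref{eq:prior_short_hand},
\begin{equation*}
	\prob_t = \frac{
		\sum_{b=1}^{N_k} \lambda^B_{k,b}\, Q_{k,b}(t)\, p_{k,b}^t + \sum_{m=1}^{M_k} \lambda^M_{k,m}\, F_{k,m}(t)\, p_{k,m}
	}{
		\sum_{b=1}^{N_k} \lambda^B_{k,b}\, Q_{k,b}(t) + \sum_{m=1}^{M_k} \lambda^M_{k,m}\, F_{k,m}(t)
	},
\end{equation*}
and to recognize it as a weighted average of the numbers $\{p_{k,b}^t\}_{b=1}^{N_k}\cup\{p_{k,m}\}_{m=1}^{M_k}$ with weights $\lambda^B_{k,b}Q_{k,b}(t)$ and $\lambda^M_{k,m}F_{k,m}(t)$. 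The lemma then follows from the elementary fact that a weighted average with nonnegative weights and strictly positive total weight is never smaller than the least of the averaged values.

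Carrying this out, I would first check positivity of the weights: the mixture coefficients $\lambda^B_{k,b},\lambda^M_{k,m}$ are nonnegative by \eqref{eq:prior}; since $\alpha_{k,b},\beta_{k,b}>0$ and $t-t_{k-1}-1\geq 0$ on the stated range $t_{k-1}<t\leq t_k$, every argument of a gamma function appearing in $Q_{k,b}(t)$ is strictly positive, hence $Q_{k,b}(t)>0$; and $F_{k,m}(t)=(1-p_{k,m})^{t-t_{k-1}-1}\geq 0$ since $0<p_{k,m}\leq 1$ and the exponent is a nonnegative integer. Next I would note that the denominator is strictly positive: it equals $\Pr(t_k\geq t\mid t_{k-1})$, the probability that we are still waiting for the $k^{\text{th}}$ observation at round $t$, which is positive precisely for $t\leq t_k$ (otherwise $\prob_t$ in \eqref{eq:p_t} would be undefined); equivalently, in the closed form at least one term $\lambda^B_{k,b}Q_{k,b}(t)$ or $\lambda^M_{k,m}F_{k,m}(t)$ is nonzero on this range.

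With these facts in hand, the conclusion is immediate: for any reals $\{a_i\}$ and nonnegative $\{c_i\}$ with $\sum_i c_i>0$ one has $\sum_i c_i a_i\geq(\min_i a_i)\sum_i c_i$, so $(\sum_i c_i a_i)/(\sum_i c_i)\geq\min_i a_i$; applying this with the $a_i$'s ranging over $\{p_{k,b}^t\}_b\cup\{p_{k,m}\}_m$ and the $c_i$'s the corresponding weights yields $\prob_t\geq\min(\min_b p_{k,b}^t,\min_m p_{k,m})$. I do not expect a genuine obstacle; the only point that warrants a line of care is the boundary case $t=t_{k-1}+1$, where one adopts the convention $0^0=1$ for $F_{k,m}(t)$ and observes that the gamma-function arguments in $Q_{k,b}(t)$ remain positive, so the representation of $\prob_t$ as a well-defined weighted average still holds.
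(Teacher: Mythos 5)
Your proposal is correct and follows essentially the same route as the paper: the paper's proof likewise identifies $\prob_t$ as a convex combination of the $p_{k,b}^t$ and $p_{k,m}$ with nonnegative weights $\lambda^B_{k,b}Q_{k,b}(t)/Z_k(t)$ and $\lambda^M_{k,m}F_{k,m}(t)/Z_k(t)$ and concludes that it is at least the minimum of the combined values. Your extra checks on weight positivity and the boundary case $t=t_{k-1}+1$ are sound but go beyond what the paper spells out.
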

\begin{proof}
	As previously shown, $p_t$ is a convex combination of $p_{k,b}^t$ and $p_{k,m}$ for $1\leq b\leq N_k$ and $1\leq m\leq M_k$, with nonnegative weights $\lambda^B_{k,b} 
	Q_{k,b}(t)/Z_k(t)$ and $\lambda^M_{k,m}
	F_{k,m}(t)/Z_k(t)$, where
	$Z_k(t) = \sum_{b=1}^{N_k} \lambda^B_{k,b}Q_{k,b}(t)
			+ \sum_{m=1}^{M_k} \lambda^M_{k,m}F_{k,m}(t)$ is the normalization factor so that all weights sum to $1$.
	Consequently, $p_t$ is at least the minimum of those combined. 
\end{proof}

What Lemma \ref{lemma:prior_p_t_lower} shows us -in combination with \eqref{eq:prior_short_hand}- is that $(1/\prob_t)$ is $O(t_k-t_{k-1})$ for $t_{k-1} < t \leq t_k$ whenever the parameters of beta distributions and mass functions in the $k^{th}$ prior are such that $\alpha_{k,b}$ is $\Omega(1)$, $\beta_{k,b}$ is $O(t_k - t_{k-1})$ and $p_{k,m}$ is $\Omega((t_k - t_{k-1})^{-1})$ for $1\leq b\leq N_k$ and $1\leq m\leq M_k$ for each $1\leq k\leq K$, respectively. Most notably, these constraints hold when these parameters -which are determined before time $t_k$ is realized- belong to $\Theta(1)$, e.g. they are a priori determined and sequentially revealed as observations are made. Consequently, 
the first and second regret moment guarantees in Corollaries \ref{corollary:E[R_T]_known_p} and \ref{corollary:E[R_T^2]_known_p} match -up to a constant factor- with the respective moments of the adversarial regret lower bound in Lemma \ref{lemma:regret_lower_missing}. 

\begin{figure}[!t]
	\centering
	\caption[caption]{APGD.EP: Adaptive Projected Sub-Gradient Descent \\ (with Empirical Probability Estimates)}\label{algorithm:empirical}
	\begin{algorithmic}[1]
		\REQUIRE $\decision_1 \in \Set$, $\loss_t \in \subgrads{f_t}{\decision_t}$ for $t=t_k$ for some $k\geq 1$.
		\ENSURE $\decision_t \in \Set$ for $t \geq 2$.
		\STATE Initialize $t=t_1$ (wait until $1^{st}$ observation), $k=1$, $Z=1$.
		\STATE $J = \{t_1\}$: list of ascending occurred time-differences
		\STATE $C = \{1\}$: count of time difference occurrences
		\STATE $Z$ : denominator for $\widehat{p_t}$ computation
		\WHILE{$\loss_t = 0$}
		\STATE $k \leftarrow k+1$. \label{line:goto1}
		\STATE $Z = k$, $i=1$ (index to update for $J$ and $C$).
		\WHILE[wait until next observation]{$t<t_k$}
		\STATE $t \leftarrow t+1$.
		\IF[if not, will insert at the end]{$i \leq length(J)$}
		\IF{$t-t_{k-1} > J(i)$} 
		\STATE $Z \leftarrow Z - C(i)$.
		\STATE $i \leftarrow i+1$.
		\ENDIF
		\ENDIF
		\ENDWHILE 
		\IF{$i \leq length(J)$ \AND $t_k-t_{k-1} = J(i)$} 
		\STATE $C(i) \leftarrow C(i)+1$
		\ELSE[Using Linked List Insert]
		\STATE Insert $(t_k-t_{k-1})$ to $J$ so that $J(i) = t_k-t_{k-1}$.
		\STATE Insert $1$ at $i^{th}$ index of $C$.
		\ENDIF \label{line:exit1}
		\ENDWHILE
		\STATE Initialize $G = 0$, $D$: diameter of $\Set$.
		\WHILE{not terminated}
		\STATE Observe $\loss_t \in \subgrads{f_t}{\decision_t}$.
		\IF{$\loss_t \neq 0$}
		\STATE Compute $\widehat{p_t} = C(i) / Z$. (Implicit Common Prior)
		\STATE Set $\widetilde{\loss_t} = \loss_t / \widehat{p_t}$. (Unbiased Estimate)
		\STATE $G \leftarrow G + \|\widetilde{\loss_t}\|^2$. (Adaptive Normalizer)
		\STATE Decide $\eta_t = \sqrt{1/2} D/G$. 
		(Corollaries \ref{corollary:gradient_descent_adaptive_eta} and \ref{corollary:expected_regret_unbiased})
		\STATE Set $\decision_{t+1} = \proj{\Set}{\decision_t - \eta_t \widetilde{\loss_t}}$. (Update)
		\ENDIF 
		\STATE Update $k$, $t$ and $J$, $C$, and $Z$, $i$ similar to Lines \ref{line:goto1}-\ref{line:exit1}.
		\ENDWHILE
	\end{algorithmic}
\end{figure}
\section{Empirical Probability Estimation} 
\label{section:empirical}
In this final methodology section, we discuss how to proceed when neither the marginal probability of making an observation at time $t$, i.e. $\prob_t$, nor a prior for the stationary probability of making the $k^{th}$ observation, i.e. $\pdf_k(\cdot)$, are revealed. We build up on the previous prior-generated stationary probability scheme by assuming an unknown common prior $\pdf(\cdot)$ such that $\pdf_k(\cdot) = \pdf(\cdot)$ for all $k\geq 1$. We shall not impose any further restrictions upon the prior $\pdf(\cdot)$, i.e. it is not necessarily a mixture of beta distributions and mass functions like exemplified in the previous section. 
For a common prior, following \eqref{eq:p_t}, $\prob_t$ becomes such that
\begin{equation} \label{eq:p_t_uniform_prior}
	\prob_t 
	= \frac{\Pr(t_k=t|t_{k-1})}{\Pr(t_k\geq t|t_{k-1})} 
	= \frac{\Pr(t_k-t_{k-1}=t-t_{k-1})}{\Pr(t_k-t_{k-1}\geq t-t_{k-1})}, 
\end{equation}
where $t_{k-1}$ conditioning disappears due to $\pdf_k(\cdot) = \pdf(\cdot)$. Consequently, the time differences between successive observations, i.e. each $t_k - t_{k-1}$, become independent and identically distributed random variables. We can rewrite $\prob_t$ in \eqref{eq:p_t_uniform_prior} using the cumulative distribution function $F(\cdot)$ with $F(t-t_{k-1}) = \Pr(t_k - t_{k-1} \leq t-t_{k-1})$. Afterwards, we shall approximate $p_t$ by replacing $F(\cdot)$ with $F_t(\cdot)$ such that
\begin{equation} \label{eq:p_t_hat}
	\widehat{p_t} = 1 - \frac{1 - F_t(t-t_{k-1})}{1- F_t(t-1-t_{k-1})},
\end{equation}
where $F_t(\cdot)$ is the empirical distribution function, which converges uniformly to $F(\cdot)$ according to Glivenko-Cantelli theorem \cite{Vaart}. We present the pseudo-code in Algorithm \ref{algorithm:empirical}, which includes the efficient computation of $F_t(t-t_{k-1})$ using the differences $P_t(t-t_{k-1}) = F_t(t-t_{k-1}) - F_t(t-1-t_{k-1})$, i.e. the total number of times $t_{k'}-t_{k'-1}$ for any $1\leq k'\leq k$ has equaled $t-t_{k-1}$ divided by the number of observations $k$. 
Since $p_t$ is only utilized when an observation is made, we can assume that $t_k = t$ in the calculation of $p_t$. 
\begin{remark}
	Interestingly, this empirical
	approach can model not only constant probabilities independently generated from a common prior $\pdf(\cdot)$, but also any shift-invariant conditional probability conforming to \eqref{eq:p_t_uniform_prior}. Furthermore, the empirical distribution $F_t(\cdot)$ can be separately computed for a number of disjoint sub-gradient sets, i.e. $F_t^{G}(\cdot)$ for sub-gradient set $G$, 
	and 
	\eqref{eq:p_t_hat} can be computed by using the corresponding $F_t^G(\cdot)$ with 
	$\loss_t \in G$. 
	Moreover, with sufficient computation power, the collection of pairs $\left(G,F_t^{G}(\cdot)\right)$
	can be updated 
	with time 
	to preserve similar convergence rates between $F_t^{G}(\cdot)$.
\end{remark} 

\section{Experiments} \label{section:experiments}
In this section, we compare the performance of \figurename \ref{algorithm:empirical}, henceforth called 'APGD.EP', against various competitors derived from the online sub-gradient descent framework in \figurename \ref{algorithm:gradient_descent} under the setting of stochastic feedback failure for online convex optimization. 
These competitors either have knowledge regarding the nature of stochastic feedback failure and can guarantee convergence in a minimax optimal manner, or they employ a standard approach towards remedying their knowledge deficiency without such guarantees.
As opposed to these competitors, 'APGD.EP' (\figurename \ref{algorithm:empirical}), is an online sub-gradient descent algorithm with an intricate empirical approach to resolve the information deficiency by learning the underlying nature of the stochastic feedback failure. 
Our experiments
demonstrate that 'APGD.EP' can perform similar to the minimax optimal algorithms without the a priori knowledge they can access. Moreover, the experiments also show that 'APGD.EP' can perform similar to the competitors which are 
tuned for the further assumption of 
the target function sequence 
being either randomized or adversarial. 

\subsection{Competing Adaptive Descent Variants}
Six adaptive sub-gradient descent variants are tested under the labels 'Ignore', 'w/Prior', 'w/Known', 'GML', 'Uniform', 'APGD.EP'. They differ in their estimates $\observe_t$ for the sub-gradients 
and corresponding step sizes $\eta_t$. 

\textbf{Ignore:} 
It uses $\observe_t=\loss_t$ when available and, thus, 
ignores the likelihood to observe and naively updates the estimate whenever data is available.
It works well for the non-adversarial scenario when the function sequence is randomized. 

\textbf{w/Known:} 
It knows the conditional probability of observation $\prob_t$ and 
computes the unbiased estimate as $\observe_t=(1/\prob_t)\loss_t$ if observed. 
It is minimax optimal.

\textbf{w/Prior:} 
It does not know the actual $p_t$ but knows the priors that randomly generate the probabilities to observe after each observation, and computes $p_t$ and $\observe_t=(1/\prob_t)\loss_t$, if observed, accordingly. 
It is also minimax optimal.

\textbf{Uniform:} 
It knows neither $\prob_t$ nor its prior. As a remedy, it naively assumes a uniform prior 
following all observations and, as a result, computes $\observe_t = (t-t_{k-1}+1)\loss_t$ if observed where $t_{k-1}$ is the time of last observation, with $t_0 = 0$. 

\textbf{GML:} It is the greedy maximum likelihood approach such that it assumes a constant observation probability between successive observations like 'w/Prior'. Then, it estimate $p_t$ 
by maximizing the probability of ($t_k = t$ given $t_{k-1}$), which results in 
$\observe_t=(t-t_{k-1}) \loss_t$ if observed. 
Interestingly, it is optimal for the adversarial scenario in Lemma \ref{lemma:regret_lower_missing}. 

\textbf{APGD.EP:} It is the empirical approach in \figurename \ref{algorithm:empirical}. It knows neither $\prob_t$ nor the priors generating them. It derives an empirically convergent approximation of $p_t$ assuming each observation -and the failures leading to it- 
as IID events. 
Its assumption encapsulates the scenario of an unknown common prior distribution following all observations. 

Next three subsections detail the experiments.

\renewcommand{\figurename}{Fig.}
\begin{figure*}[!t]
	\centering
	\subfloat[]{
		\includegraphics[width=0.5\textwidth]{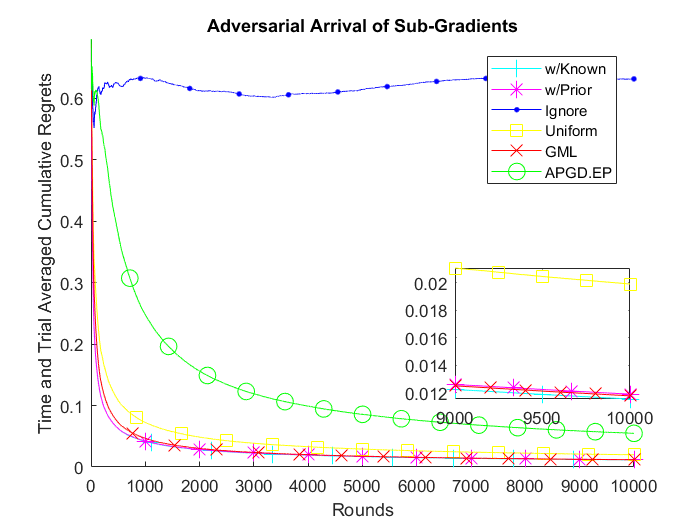}
		\label{figure:adversarial_cumulative}
	} 
	\subfloat[]{
		\includegraphics[width=0.5\textwidth]{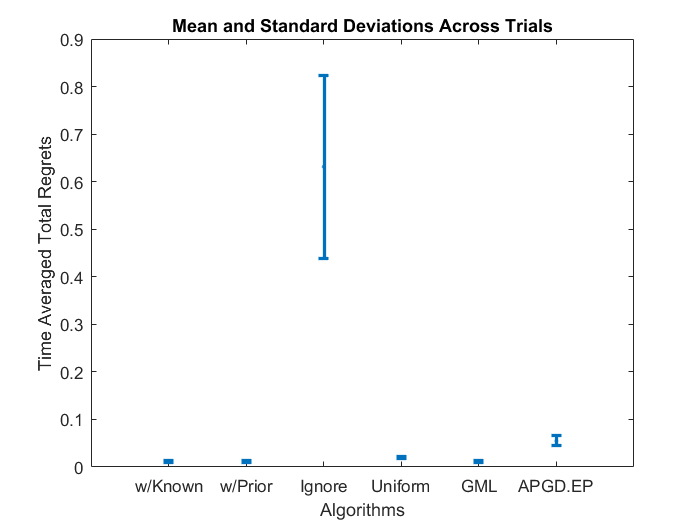}
		\label{figure:adversarial_mean_std}
	}
	\newline 
	\subfloat[]{
		\includegraphics[width=0.5\textwidth]{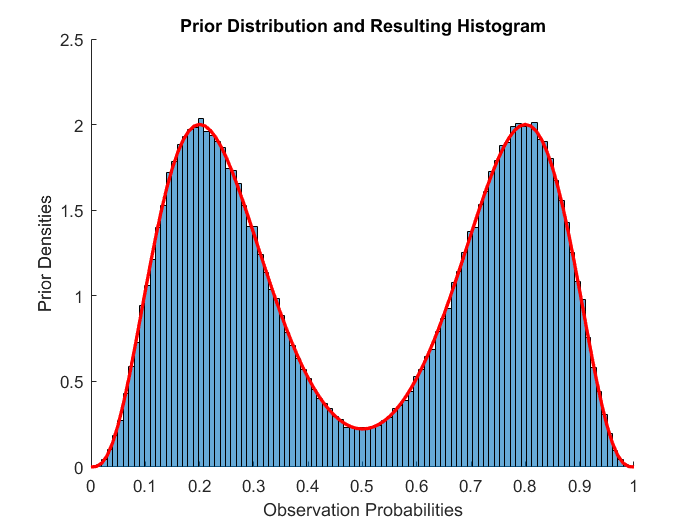} 
		\label{figure:prior}
	}
	\caption{(a) Time-averaged cumulative regret behavior of 6 algorithms for adversarial sub-gradients. (b) Mean and SD of the time-averaged total regrets across trials. (c) Prior distribution used for generating random constant observation probabilities between successive observations.} 
\end{figure*}

\subsection{Simulated Adversarial Sub-Gradient Arrival} \label{subsec:simulated_adversary}
In the adversarial scenario used in our lower bound analyses, the sub-gradient vector was sampled from a 1D line with a random direction. Nevertheless, we use 16 dimensional decision (and gradient) space for a more obvious sense of generality. We restrict the decision space to the unit 
radius hypersphere and the adversarial function sequence is composed of linear functions, which are 
arbitrarily chosen as the inner product of the decision with a scaled all-one vector. The scale is randomly selected as $0.25$ or $-0.25$ following each observation. 

A summary of time-averaged cumulative regrets is presented in \figurename \ref{figure:adversarial_mean_std}. Each error bar represents the mean and standard deviation of the time-averaged (over 10000 rounds) cumulative loss of an algorithm  over 50 trials. During the online procedure, following each observation, a new random probability to observe is generated from a bimodal common prior, a uniform mixture of beta distributions with parameters $\alpha_1=4, \beta_1=13$ and $\alpha_2=13, \beta_2=4$, respectively, as shown in \figurename \ref{figure:prior}.
\figurename \ref{figure:adversarial_cumulative}, which displays the evolution of average regret performance, with zooming to the final rounds, and \figurename \ref{figure:adversarial_mean_std} show 'APGD.EP' can perform similar to minimax optimal algorithms 'w/Known' and 'w/Prior' both of which require knowledge about the observation probabilities. 'APGD.EP' can also compete with 'Uniform' and 'GML' both 
of which have an unfair advantage in specifically 
adversarial scenarios as their updates correspond to (almost for 'Uniform' and exactly for 'GML') the actual sub-gradient accumulation since the previous observation. In essence, 'APGD.EP' succeeds since it can learn the underlying nature of stochastic feedback. Furthermore, as expected, 'Ignore' performs poorly in such a scenario as it ignores the time past since last observation. Thus, it is the only one whose average regret cannot converge to 0.

\begin{figure*}[!t]
	\centering
	\subfloat[]{
		\includegraphics[width=0.5\textwidth]{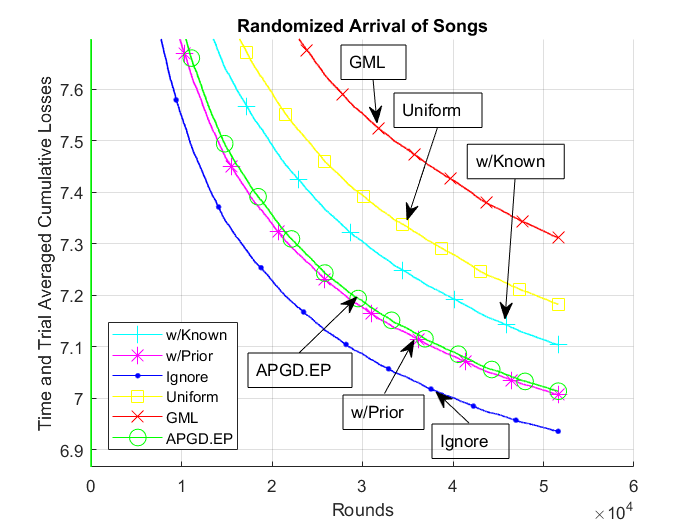}
		\label{figure:song_test_set_cumulative}
	} 
	\subfloat[]{
		\includegraphics[width=0.5\textwidth]{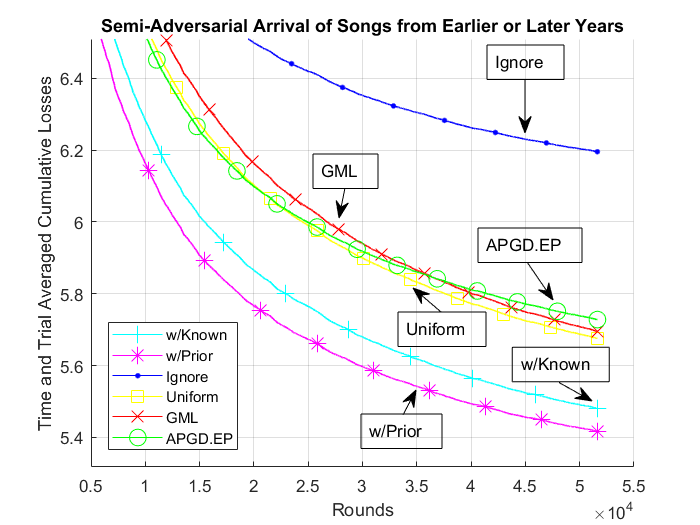}
		\label{figure:song_adversarial_cumulative}
	}
	\newline 
	\subfloat[]{
		\includegraphics[width=0.5\textwidth]{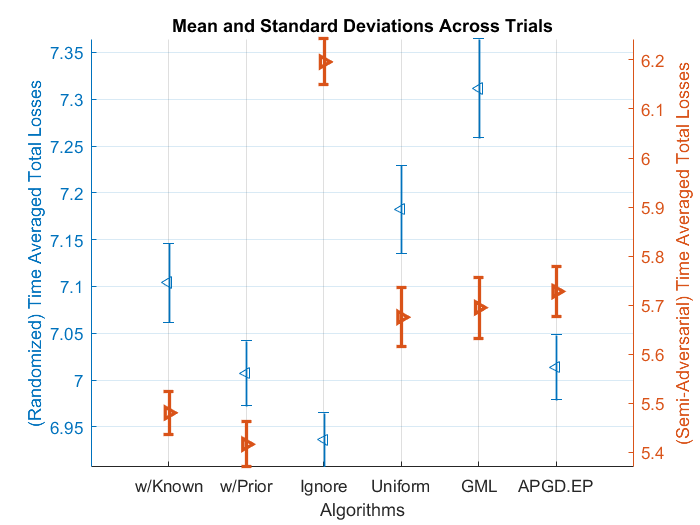}
		\label{figure:song_mean_std}
	}
	\caption{Time-averaged cumulative losses of 6 algorithms  for release year prediction from the Million Song Dataset for a standard and an adversarial scenarios: (a) Non-adversarial scenario with songs arriving randomly without pattern (b) Semi-adversarial scenario with songs between successive observations all from either earlier (before 2000) or later years (2000 and after). (c) Mean and SD of the total losses from 6 algorithms for 2 cases.}
\end{figure*}

\subsection{Linear Regression with Absolute Error for Year Prediction}
\label{subsec:linear_regression}
In this experiment, the aim is year prediction over a subset of the Million Song Dataset \cite{Bertin-Mahieux} obtained from the UCI Machine Learning Repository \cite{Dua}. From this dataset, we extract 51,630 samples (same quantity as the actual test set) to simulate an online learning procedure. Each sample consists of the audio features and the release year of the corresponding song. 
A total of 90 features, namely 12 timbre averages and 78 covariances, are extracted per sample from The Echo Nest API. As a preprocessing step, each feature is normalized so that average of them and their squares are 0 and 1, respectively. This process speeds up the convergence rate for all competitors in an unbiased manner. Additionally, the years are known to 
peak in quantity around 2000s, thus they are shifted by $-2000$ before optimization. The absolute deviation error is used as the loss at each time such that we incur the loss ${error(t) = |year.estimate(t) - actual.year(t)|}$. We use the linear regression model ${year.estimate(t) = \decision_t^T x_t}$ where $w_t$ is the decision and $x_t$ is the feature at $t$ (not necessarily observed) with an appended bias factor of $1$. The decision space is restricted to the smallest origin-centered ball with a radius of an integer power of $2$ that contains the analytical least squares solution (as opposed to the least absolute deviation). 

A total of 100 trials are run to simulate the online procedure.  During the odd-indexed trials, we sequentially sample a randomly ordered version of the dataset, similar to learning with SGD. During the even-indexed trials however, before the online procedure, we separate the dataset into two partitions. A song belongs to the earlier or later years partition depending on whether its release year is before 2000 or not. The samples in these partitions are also randomly ordered. Then, in a somewhat adversarial manner, the sample songs sequentially arrive either from the earlier years or the later years depending on which beta distribution component of the prior is randomly selected (in accordance with their mixture weights of $1/2$) to generate the random constant probability until the next observation. Beta parameters are, again, $\alpha_1=13, \beta_1=4$ or $\alpha_2=4, \beta_2=13$, respectively, for earlier or later years. In the end, for the task of regression we have two different experiment for the scenarios of randomized and adversarial function sequences. This divergence in experiments will show how 'APGD.EP' can perform similar to the optimal competitor, which does not need to know $p_t$ or priors but whether the function sequence is 
randomized or adversarial. 

A mean/std summary of time-averaged cumulative losses is presented in \figurename \ref{figure:song_mean_std} (similar to the simulated adversary scenario in Section \ref{subsec:simulated_adversary}) with the y-axes of error bars with right-pointing and left-pointing triangles are placed on right and left sides for randomized and semi-adversarial scenarios, respectively. 
\figurename\ref{figure:song_test_set_cumulative} and \figurename\ref{figure:song_adversarial_cumulative} displays the temporal evolution of the time-averaged accumulated losses, which are also averaged across trials for each round, for each algorithm.
\figurename\ref{figure:song_test_set_cumulative}, \ref{figure:song_adversarial_cumulative} and \ref{figure:song_mean_std} 
show us that 
when neither $p_t$ nor its prior is known, 'APGD.EP' from Algorithm \ref{algorithm:empirical} works as the next best for both the non-adversarial and semi-adversarial scenarios in \figurename \ref{figure:song_test_set_cumulative} and  \ref{figure:song_adversarial_cumulative}, respectively. The best competitor in \figurename\ref{figure:song_test_set_cumulative} is 'Ignore', which does not require any probability knowledge but holds an unfair advantage for randomized (SGD-like) procedures due to its inherent equal probability assignment, and also performs the worst under a somewhat adversarial scenario as in \figurename \ref{figure:song_adversarial_cumulative}. The best competitor in \figurename \ref{figure:song_adversarial_cumulative} is 'w/Known' and 'w/Prior' duo, which require $p_t$ and a corresponding prior, respectively. 
Regarding the variants requiring no knowledge, 'APGD.EP' works similar to 'GML' and 'Uniform' in semi-adversarial scenario of \figurename\ref{figure:song_adversarial_cumulative}, even though they hold an unfair advantage under the adversarial scenarios, as showcased in our lower bound analysis. 
However, they also performed the worst, especially 'GML', for a standard non-adversarial scenario as in \figurename \ref{figure:song_test_set_cumulative}. Consequently, when no information on the 
probabilities are present, 'APGD.EP' is the best overall choice thanks to its empirical probability estimates with convergence guarantees.

\begin{figure*}[!t]
	\centering
	\subfloat[]{
		\includegraphics[width=0.5\textwidth]{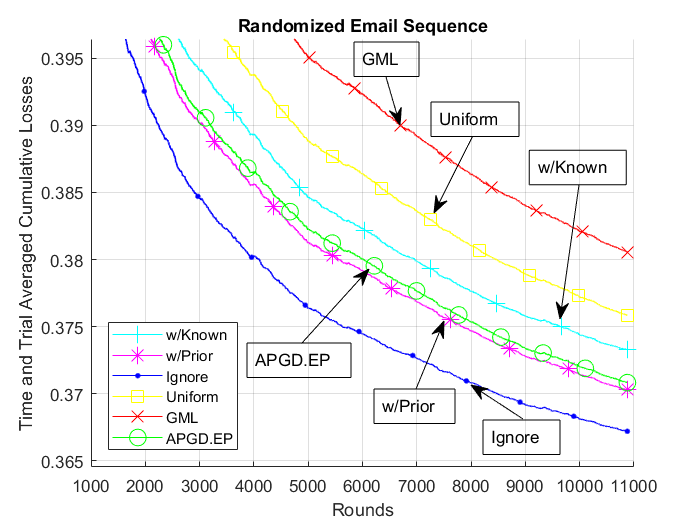}
		\label{figure:spam_random_cumulative}
	} 
	\subfloat[]{
		\includegraphics[width=0.5\textwidth]{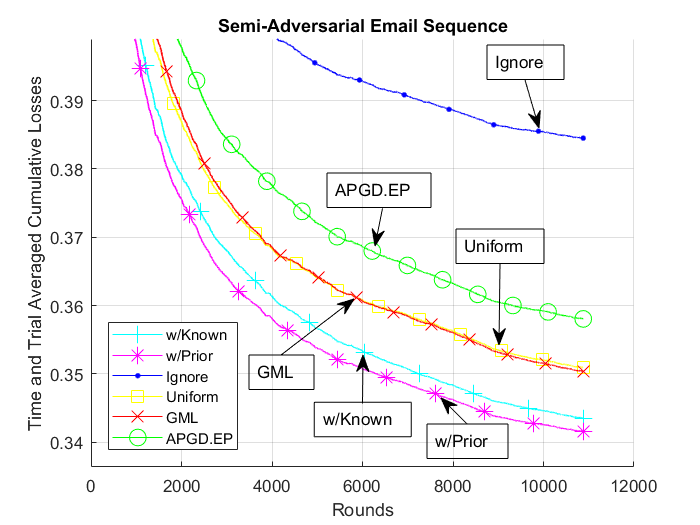}
		\label{figure:spam_adversarial_cumulative}
	}
	\newline 
	\subfloat[]{
		\includegraphics[width=0.5\textwidth]{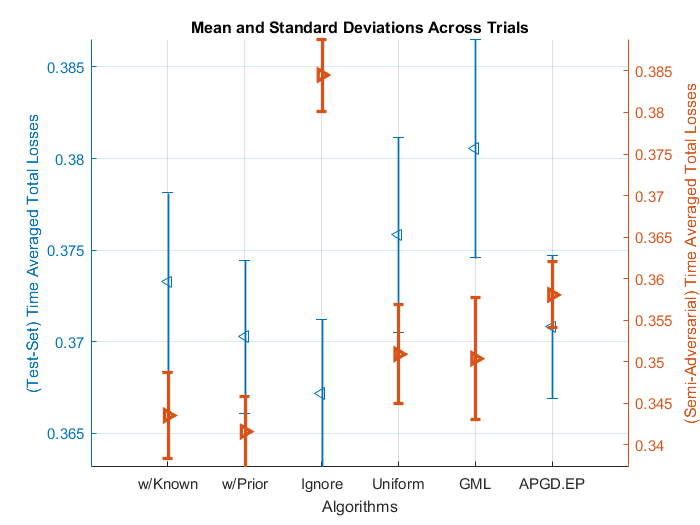}
		\label{figure:spam_mean_std}
	}
	\caption{Time-averaged cumulative logistic losses of 6 algorithms for spam email classification from Spambase Data Set for a standard and an adversarial scenarios: (a) Non-adversarial scenario with emails arriving randomly without pattern (b) Semi-adversarial scenario with emails between successive observations all either spam or not. (c) Mean and SD of the total losses from 6 algorithms for 2 cases.}
\end{figure*}
\begin{figure*}[!t]
	\centering 
	\subfloat[]{
		\includegraphics[width=0.5\textwidth]{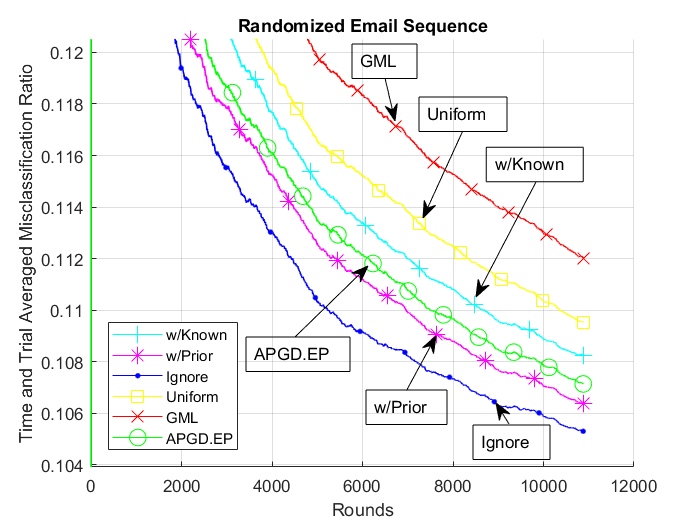}
		\label{figure:spam_random_misclass}
	} 
	\subfloat[]{
		\includegraphics[width=0.5\textwidth]{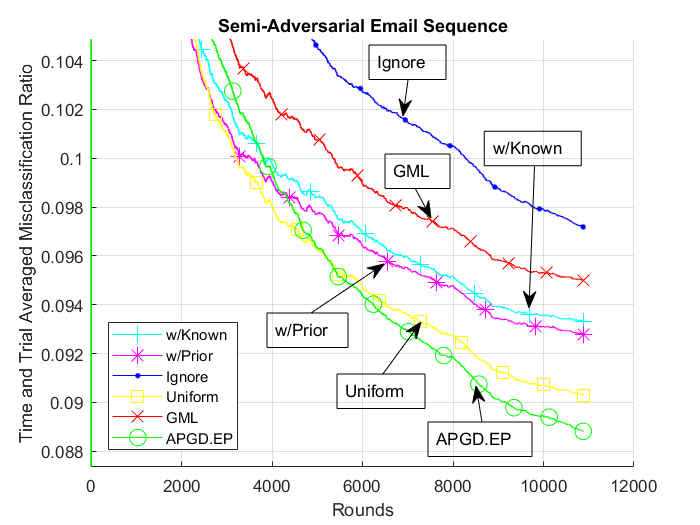}
		\label{figure:spam_adversarial_misclass}
	}
	\newline 
	\subfloat[]{
		\includegraphics[width=0.5\textwidth]{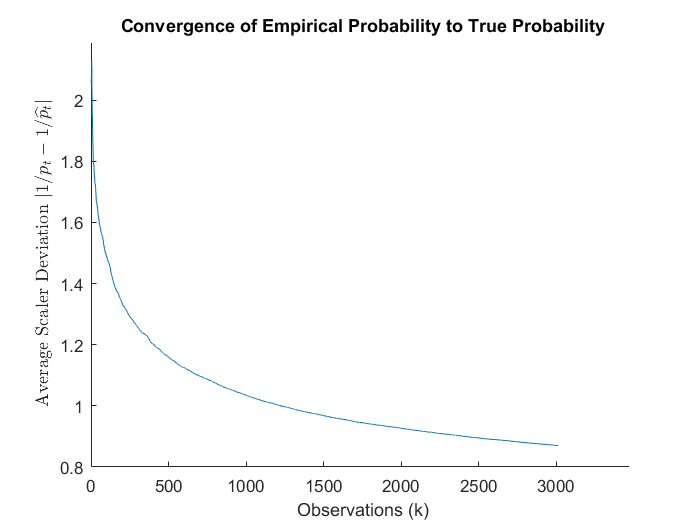}
		\label{figure:empiric_converge}
	}
	\caption{Average misclassification ratio of 6 algorithms for spam email classification from Spambase Data Set for (a) standard and (b) adversarial scenarios. Also, (c) convergence of empirical probability in time and trial averaged $|1/p_t - 1/\widehat{p_t}|$ as observations are made.}
\end{figure*}

\subsection{Classification as Logistic Regression for Spam Detection}
In this experiment, we investigate the problem of classification using logistic regression. We use the Spambase Dataset obtained from the UCI 
ML Repository \cite{Dua}. It contains 4601 samples (emails), with 57 features, 54 of them
describing the frequencies of select
48 
words and 
6 
characters in an email. Remaining 3 features are the average and longest length of an uninterrupted capital letter sequence, and the total number of capitalized letters in an email. The target (binary label) $y_t$ describes whether an email is spam. 
The loss is defined as 
$error(t) = - y_t\log(h(x_t;\decision_t)) - (1-y_t)\log(1-h(x_t;\decision_t))$,
where 
$\decision_t$ is our decision, $x_t$ is the feature vector of the email received at that time and ${h(x_t;\decision_t)}$ effectively becomes the probability of being spam estimated by $\decision_t$ to an email with feature vector $x_t$ such that  ${h(x_t;\decision_t) = (1+\exp(-\decision_t^T x_t))^{-1}}$. 

Similar to the previous experiment, each feature is normalized and the constant $1$ is appended to each $x_t$. 
The decision space is restricted to the unit $L^2$ ball. 
We extract from the dataset 10878 samples to better visualize the temporal evolution and guarantee that there will always be samples to supply for the semi-adversarial scenario 
(similar to the one in Section \ref{subsec:linear_regression}), 
by concatenating 6 copies of the dataset.
Then, like before, 
100 trials are run to simulate the online procedure and for the odd-indexed trials, the dataset is randomly ordered, while for the even-indexed trials, the dataset is partitioned as spam or not, both being randomly ordered themselves. Like the song experiment, for the even trials, the email is spam or regular 
depending on the randomly selected beta distribution, which has generated the constant probability until the next observation. As before, the 
parameters are $\alpha_1=13, \beta_1=4$ or $\alpha_2=4, \beta_2=13$, respectively, for spam and regular. 

\figurename\ref{figure:spam_random_cumulative}, \ref{figure:spam_adversarial_cumulative} and \ref{figure:spam_mean_std} collectively show that 'APGD.EP' performs very reliably. In the randomized email sequence scenario shown in \figurename\ref{figure:spam_random_cumulative}, it nears the performance of 'Ignore', which has an unfair advantage for that scenario, and against the minimax optimal algorithms 'w/Known' and 'w/Prior', which require information of observation probability $p_t$ or its prior, it performs similar to the higher performing one. In the semi-adversarial email sequence scenario shown in \figurename\ref{figure:spam_adversarial_cumulative}, it again nears the performance of 'Uniform' and 'GML', which hold 
an unfair advantage for this scenario, similar to 'Ignore' with the randomized sequence. Its convergence also nears the minimax optimal algorithms again. The means and standard deviations shown in \figurename\ref{figure:spam_mean_std} also display similar comparisons. 

In addition to these logistic loss analysis, we also detail in \figurename\ref{figure:spam_random_misclass} and \figurename\ref{figure:spam_adversarial_misclass} the misclassification performances. Interestingly, although comparisons here liken to the loss versions, the exceptions would be the increased relative performance of 'Uniform' and 'APGD.EP' with 'APGD.EP' outperforming. These results, combined with the convergence of scalers ($1/p_t$ estimate) by the empirical probabilities, as shown in \figurename\ref{figure:empiric_converge}, further strengthen our claim to utilize 'APGD.EP' in the absence of information about the stochastic observability.

\section{Conclusion}
\label{sec:conclusion}
In this paper, we have investigated the online convex optimization scenario where an optimizer may noisily observe the environmental vectors, i.e. sub-gradients, during an online sequential convex optimization procedure. We even considered the case where this noise can cause complete 
loss of sub-gradient feedback. 
After constructing the unbiased estimators of these noise corrupted or stochastically lost sub-gradients, 
we have devised adaptive algorithms with minimax optimality guarantees for both arbitrary zero-mean noises and complete stochastic observation failures. To construct the estimators, 
we either use the observation probability at each optimization round or 
derive (estimate) said probability by knowing some prior used in its generation following the previous observation, depending on the knowledge available. 
Finally, we have derived an algorithm, which required no external knowledge on the stochastic properties of feedback failures, by utilizing the empirical distribution function and directly estimating the needed conditional probability of observation, which also covered the prior generation scenario for a time-invariant prior. We tested these algorithms under the stochastic feedback failure setting for both randomized and adversarial target function sequences. 
The experiments demonstrate that the 'APGD.EP' (Alg.\ref{algorithm:empirical}), the empirical approach, could perform similar to the optimal algorithms with knowledge on the nature of function sequence or the feedback failure probabilities. 
\bibliographystyle{IEEEtran_bst/IEEEtran}
\bibliography{IEEEtran_bst/IEEEabrv,bibliography}

\end{document}